\theoremstyle{plain}
\newtheorem{theorem}{Theorem}
\newtheorem{lemma}[theorem]{Lemma}
\newtheorem{corollary}[theorem]{Corollary}
\theoremstyle{definition}
\theoremstyle{remark}
\title{Unified PAC-Bayesian Study of Pessimism for Offline Policy Learning with Regularized Importance Sampling}
\author[1,2]{Imad~Aouali}
\author[1]{Victor-Emmanuel~Brunel}
\author[2]{David~Rohde}
\author[1]{Anna~Korba}
\affil[1]{%
    CREST, ENSAE, IP Paris, France
}
\affil[2]{%
    Criteo AI Lab, Paris, France
}
\begin{document}
\maketitle

 \begin{abstract}
Off-policy learning (OPL) often involves minimizing a risk estimator based on importance weighting to correct bias from the logging policy used to collect data. However, this method can produce an estimator with a high variance. A common solution is to regularize the importance weights and learn the policy by minimizing an estimator with penalties derived from generalization bounds specific to the estimator. This approach, known as pessimism, has gained recent attention but lacks a unified framework for analysis. To address this gap, we introduce a comprehensive PAC-Bayesian framework to examine pessimism with regularized importance weighting. We derive a tractable PAC-Bayesian generalization bound that universally applies to common importance weight regularizations, enabling their comparison within a single framework. Our empirical results challenge common understanding, demonstrating the effectiveness of standard IW regularization techniques.
\end{abstract}

\section{Introduction}
\label{sec:introduction}

Offline contextual bandits \citep{dudik2011doubly} have gained significant interest as an effective framework for optimizing decision-making using offline data. In this framework, an agent observes a context, takes an action based on a policy, i.e., a probability distribution over a set of actions, and receives a cost that depends on both the action and the context. These sequential interactions, recorded as logged data, serve two purposes in offline scenarios. The first is off-policy evaluation (OPE), which aims to estimate the expected cost (risk) of a fixed target policy using the logged data. The second is off-policy learning (OPL), whose goal is to find a policy that minimizes the risk. In general, OPL relies on OPE's risk estimator.

In OPE, a significant portion of research has focused on the inverse propensity scoring (IPS) estimator of the risk \citep{horvitz1952generalization, dudik2011doubly}. IPS employs importance weights (IWs), which are the ratios between the target policy and the logging policy used to collect data, to estimate the risk of the target policy. Although IPS is unbiased under mild assumptions, it can suffer from high variance, especially when the target and logging policies differ significantly \citep{swaminathan2017off}. To address this issue, various methods have been developed to regularize IPS, primarily by transforming the IWs \citep{bottou2013counterfactual, swaminathan2015batch, su2020doubly, metelli2021subgaussian, aouali23a, gabbianelli2023importance}. While these regularizations introduce some bias, they aim to reduce the estimator's variance. Most of these IW regularizations have been proposed and investigated in the context of OPE, where the primary goal is to enhance the estimator's accuracy, typically measured by mean squared error (MSE). In contrast, off-policy learning (OPL) aims to find a policy with minimal risk. Therefore, it is crucial to determine whether these IW regularizations lead to better performance in OPL.

A common approach in OPL is to learn the policy through pessimistic learning principles \citep{jin2021pessimism}, where the estimated risk is optimized along with a penalty term often derived from generalization bounds. Consequently, previous studies on OPL with regularized IPS estimators have adopted this approach but focused on specific IW regularizations. For example, \citet{swaminathan2015batch} studied the IPS estimator with clipped IWs and proposed learning a policy by minimizing the estimated risk penalized with an empirical variance term. Similarly, \citet{london2019bayesian} suggested an alternative regularization for the same estimator, incorporating an $L_2$ distance to the logging policy. Additionally, \citet{sakhi2022pac} derived tractable generalization bounds for a simplified doubly robust version of the IPS estimator with clipped IWs, using these bounds for their pessimistic learning principle. Similarly, \citet{aouali23a} derived a tractable bound for an estimator that exponentially smooths the IWs instead of clipping them and proposed two learning principles: one where the bound is optimized and another heuristic inspired by it. Finally, \citet{gabbianelli2023importance} introduced implicit exploration regularization, where a constant is added to the denominator of the IWs and used a learning principle that directly minimizes the corresponding estimator since their generalization upper bound did not depend on the target policy.

A limitation of these studies is that their guarantees and learning principles are specific to the particular IW regularization they consider and are not transferable to other IW regularizations. Consequently, in their OPL experiments, IW regularizations are compared using different learning principles, making it difficult to determine if better performance is due to the enhanced properties of the proposed IW regularizer or merely an artifact of the proposed learning principle. As a result, it remains unclear whether a particular IW regularization yields better performance in OPL. This highlights a gap in the literature: there is no unified study providing bounds on the risk of policies learned using pessimistic learning principles tailored to various regularized IW estimators of the risk. Our work aims to bridge this gap. Specifically, we provide a generic, practical generalization bound and an associated learning principle that apply universally to a large family of IW regularizations, enabling a fair comparison in practice on OPL tasks.

This paper is organized as follows. \Cref{sec:setting} provides the necessary background on IPS estimators and IW regularizations. \Cref{sec:certificates} reviews related work, focusing on the guarantees and learning principles found in the OPL literature. \Cref{sec:main_result} presents our PAC-Bayesian generalization bounds for regularized IPS and introduces our learning principles derived from these bounds. Finally, \Cref{sec:experiments} compares different IW regularizations on real-world datasets.

\section{Background}
\label{sec:setting}
\subsection{Offline Contextual Bandits}\label{subsec:setting}
An agent interacts with a \emph{contextual bandit} environment over $n$ rounds. In round $i \in [n]$, the agent observes a \emph{context} $x_i \sim \nu$, where $\nu$ is a distribution with support $\cX \subseteq \mathbb{R}^d$, a $d$-dimensional \emph{compact context space}. The agent then selects an \emph{action} $a_i$ from a \emph{finite action space} $\cA = [K]$. This action is sampled as $a_i \sim \pi_0(\cdot|x_i)$, where $\pi_0$ is the logging policy used to collect data. Specifically, for a given context $x$, $\pi_0(a|x)$ represents the probability that the agent takes action $a$ under its current (logging) policy. Finally, the agent receives a stochastic cost\footnote{For simplicity, we assume that costs $c \in [-1, 0]$, though this can be easily extended to $c \in [-C, 0]$ for $C>0$. } $c_i \in [-1, 0]$ that depends on the observed context $x_i$ and the action $a_i$. Precisely, $c_i \sim p(\cdot | x_i, a_i)$, where $p(\cdot | x, a)$ is the \emph{cost distribution} of action $a$ in context $x$. The expected cost of action $a$ in context $x$ is given by the \emph{cost function} $c(x, a) = \E{c \sim p(\cdot | x, a)}{c}$. Using an alternative terminology, costs can be defined as the negative of rewards: for any $(x, a) \in \cX \times \cA$, $c(x, a) = -r(x, a)$, where $r : \cX \times \cA \rightarrow [0, 1]$ is the \emph{reward function}. These interactions result in an $n$-sized logged data $S = ( x_i, a_i, c_i)_{i \in [n]}$, where $(x_i, a_i, c_i)$ are i.i.d from $\mu_\pi$, the joint distribution of $(x, a, c)$ defined as $\mu_\pi(x, a, c) = \nu(x)\pi(a | x)p(c | x, a)$ for any $(x, a, c) \in \cX \times \cA \times [-1, 0]$.

Agents are represented by stochastic policies $\pi \in \Pi$, where $\Pi$ denotes the space of policies. Specifically, for a given context $x \in \cX$, $\pi(\cdot | x)$ defines a probability distribution over the action space $\cA$. Then, the performance of a policy $\pi \in \Pi$ is measured by the \emph{risk}, defined as
\begin{align}\label{eq:policy_value}   R(\pi)=  \E{x \sim \nu, a \sim \pi(\cdot | x)}{c(x, a)}\,.\end{align}

Given a policy $\pi \in \Pi$ and logged data $S$, the goal of OPE is to design an estimator $\hat{R}(\pi, S)$ for the true risk $R(\pi)$  such that $\hat{R}(\pi, S) \approx R(\pi)$. Leveraging this estimator, OPL aims to find a policy $\hat{\pi}_n \in \Pi$ such that $R(\hat{\pi}_n) \approx \min_{\pi \in \Pi} R(\pi)$. We focus on the IPS estimator \citep{horvitz1952generalization}, which estimates the risk $R(\pi)$ by re-weighting samples using the ratio between $\pi$ and $\pi_0$
\begin{align}\label{eq:ips_policy_value}
    \hat{R}_{\textsc{ips}}(\pi, S) = \frac{1}{n} \sum_{i=1}^n w(x_i, a_i) c_i \,,
\end{align}
where for any $(x, a) \in \cX \times \cA$, \( w(x, a) = \pi(a | x)/\pi_0(a | x) \) are the \textit{importance weights (IWs)}.

\subsection{Regularized IMPORTANCE WEIGHTING}\label{sec:regularizations}
The IPS estimator in \eqref{eq:ips_policy_value} is unbiased when $\pi_0(a|x)=0$ implies that $\pi(a|x)=0$ for all $(x, a) \in \cX \times \cA$. However, its variance scales linearly with the IWs \citep{swaminathan2017off} and can be large if the target policy $\pi$ differs significantly from the logging policy $\pi_0$. To mitigate this effect, it is common to transform the IWs using a regularization function that introduces some bias to reduce variance. Specifically, a regularized IPS estimator is defined as
\begin{align}\label{eq:reg_ips_policy_value}
    \hat{R}(\pi, S) &= \frac{1}{n} \sum_{i=1}^n \hat{w}(x_i, a_i) c_i \,,
\end{align}
where $\hat{w}(x, a)$ are the regularized IWs. Examples of $\hat{w}$ include clipping (\texttt{Clip}) \citep{london2019bayesian}, exponential smoothing (\texttt{ES}) \citep{aouali23a}, implicit exploration (\texttt{IX}) \citep{gabbianelli2023importance}, and harmonic (\texttt{Har}) \citep{metelli2021subgaussian}, defined as
\begin{talign}\label{eq:regs}
    \texttt{Clip}: \qquad &\hat{w}(x, a) = \frac{\pi(a \mid x)}{\max(\pi_0(a \mid x), \tau)}\,, \, \tau \in [0, 1]\,,\\
    \texttt{ES}: \qquad &\hat{w}(x, a) = \frac{\pi(a \mid x)}{\pi_0(a \mid x)^\alpha}\,, \, \alpha \in [0, 1]\,,\nonumber\\
    \texttt{IX}: \qquad &\hat{w}(x, a) = \frac{\pi(a \mid x)}{\pi_0(a \mid x) + \gamma}\,, \, \gamma \in [0, 1]\,,\nonumber\\
    \texttt{Har}: \qquad &\hat{w}(x, a) = \frac{{w}(x, a)}{(1-\lambda){w}(x, a) +\lambda}\,, \, \lambda \in [0, 1]\,.\nonumber
\end{talign}
These regularizations are linear in $\pi$ except \texttt{Har}. Other non-linear regularizations have been proposed \citep{swaminathan2015batch, su2020doubly}, but we will focus on the above examples because their hyperparameters fall within the same range $[0, 1]$, facilitating their comparison.

\section{PESSIMISTIC LEARNING PRINCIPLES}\label{sec:certificates}
We now discuss the theoretical guarantees and pessimistic learning principles previously derived in the literature. An extended related work section can be found in \cref{sec:related_work}. 

Let $\hat{R}$ be an estimator of the risk $R$. In OPL, the goal is to minimize the unknown risk $R$ using the estimator $\hat{R}$. Pessimistic learning principles typically penalize $\hat{R}$, aiming to find $\hat{\pi}_n = \argmin_{\pi \in \Pi} \hat{R}(\pi, S) + \operatorname{pen}(\pi, S)$, with the expectation that $R(\hat{\pi}_n) \approx \min_{\pi \in \Pi} R(\pi)$. The penalization term $\operatorname{pen}(\cdot, S)$ is derived using one of the following methods.

\textbf{The Use of Evaluation Bounds.} \citet{metelli2021subgaussian} derived \emph{evaluation} bounds for the \texttt{Har} regularization in \eqref{eq:regs} and used them to formulate a pessimistic OPL learning principle. Specifically, they showed that the following inequality holds for a \emph{fixed target policy $\pi \in \Pi$} and $\delta \in (0, 1)$
\begin{align}\label{eq:eval_bound}
    \mathbb{P}\big(\big|R(\pi) - \hat{R}(\pi, S) \big|\leq f(\delta, \pi, \pi_0, n)\big) \geq 1-\delta\,,
\end{align}
for some function $f$. Essentially, \eqref{eq:eval_bound} indicates that for a fixed policy $\pi \in \Pi$, the event $|R(\pi) - \hat{R}(\pi, S)| \leq f(\delta, \pi, \pi_0, n)$ holds with high probability. However, this event depends on the target policy $\pi$. 
Thus \eqref{eq:eval_bound} is useful for evaluating a \emph{single target policy} when having access to \emph{multiple logged data sets $S$}. This poses a problem for OPL, where we optimize over a potentially \emph{infinite space of policies} using a \emph{single logged data set $S$}. This is the fundamental theoretical limitation of using evaluation bounds similar to \eqref{eq:eval_bound} in OPL. While it is possible to transform \eqref{eq:eval_bound} into a generalization bound that simultaneously holds for any policy $\pi \in \Pi$ by applying a union bound, this approach may result in intractable complexity terms and, consequently, intractable pessimistic learning principles.

\textbf{The Use of One-Sided Generalization Bounds.} Alternatively, generalization bounds \citep{swaminathan2015batch, london2019bayesian, sakhi2022pac} address the limitations of evaluation bounds. These bounds generally take the following form: for $\delta \in (0, 1)$,
\begin{align}\label{eq:opl_one_sided}
    \mathbb{P}\big(\forall \pi \in \Pi, R(\pi) \leq \hat{R}(\pi, S) + f(\delta, \Pi, \pi, &\pi_0, n)\big) \\  &\geq 1-\delta\,,\nonumber
\end{align}
where the function $f$ now depends on the space of policies $\Pi$. The key difference between \eqref{eq:eval_bound} and \eqref{eq:opl_one_sided} is that here the event $R(\pi) \leq \hat{R}(\pi, S_\Pi) + f(\delta, \Pi, \pi, \pi_0, n)$ holds with high probability for all target policies $\pi$. Since this is a high-probability event, we assume it holds for our logged data $S$. This is then used to define the learned policy $\hat{\pi}_n \in \Pi$ as
\begin{align}\label{eq:objective}
 \hat{\pi}_n =  \argmin_{\pi \in \Pi} \hat{R}(\pi, S) + f(\delta, \Pi, \pi, \pi_0, n)\,.
\end{align}
The issue with \eqref{eq:opl_one_sided} is that it is a \emph{one-sided} inequality that does not attest to the quality of the estimator $\hat{R}$. To illustrate, consider that with probability 1, $R(\pi) \leq \hat{R}^{\textsc{poor}}(\pi)$, using a poor estimator of the risk, $\hat{R}^{\textsc{poor}}(\pi) = 0$ for any $\pi \in \Pi$. This holds because, by definition, $R(\pi) \in [-1, 0]$. However, $\hat{R}^{\textsc{poor}}$ is not informative about $R$, making its minimization irrelevant. Thus we need to control the quality of the upper bound on $R$, which is achieved by \emph{two-sided} inequalities
\begin{talign}\label{eq:opl_two_sided}
     \mathbb{P}\big(\forall \pi \in \Pi, |R(\pi) - \hat{R}(\pi, S)| \leq f(\delta, \Pi, \pi, &\pi_0, n)\big) \\  &\geq 1-\delta\,.\nonumber
\end{talign}
Here, the pessimistic learning principle in \eqref{eq:objective} uses the function $f$ from the two-sided inequality in \eqref{eq:opl_two_sided}. In particular, this allows us to derive high-probability inequalities on the suboptimality (SO) gap of $\hat{\pi}_n$, which is the difference $R(\hat{\pi}_n) - R(\pi_*)$. Specifically, we can show that $R(\hat{\pi}_n) - R(\pi_*) \leq 2f(\delta, \Pi, \pi_*, \pi_0, n)$, where $\hat{\pi}_n$ is the learned policy from \eqref{eq:objective} (with $f$ obtained from the two-sided inequality in \eqref{eq:opl_two_sided}) and $\pi_* = \argmin_{\pi \in \Pi} R(\pi)$ is the optimal policy. This demonstrates why pessimism is appealing in OPL: the suboptimality gap of the learned policy $\hat{\pi}_n$, i.e., $R(\hat{\pi}_n) - R(\pi_*)$, is bounded by $2f(\delta, \Pi, \pi_*, \pi_0, n)$, where $f$ is evaluated at the optimal policy $\pi_*$. Consequently, the risk estimator $\hat{R}$ needs to be precise only for the optimal policy, rather than for all policies within the class $\Pi$.

\textbf{The Use of Heuristics.} Many studies have proposed specific heuristics where a simplified function \( g \) is used instead of the theoretical function $f$ in \eqref{eq:objective}. For example, \citet{swaminathan2015batch} minimized the estimated risk while penalizing the empirical variance of the estimator. This approach was inspired by a generalization bound with a function \( f \) that includes a variance term but discards more complicated terms from the bound, such as the covering number of the policy space \( \Pi \). Similarly, \citet{london2019bayesian} parameterized policies by a mean parameter and proposed penalizing the estimated risk by the \( L_2 \) distance between the means of the logging and target policies, discarding all other terms from their generalization bound. While these heuristics lead to tractable and computationally attractive objectives, they often lack theoretical justification and guarantees. We note that pessimistic principles have been used in a different context than regularized IPS estimators. For example, \citet{wang2023oracle} proposed a heuristic approach where the standard (non-regularized) IPS estimator $\hat{R}_{\textsc{ips}}(\pi, S)$ in \eqref{eq:ips_policy_value} is penalized with a pseudo-loss \(\textsc{PL}(\pi, S) = \frac{1}{n} \sum_{i \in [n]}\sum_{a \in \cA} \frac{\pi(a|x_i)}{\pi_0(a|x_i)}\). Precisely, they defined \(\hat{\pi}_n = \argmin_{\pi \in \Pi} \hat{R}_{\textsc{ips}}(\pi, S) + \beta \textsc{PL}(\pi, S)\), where \(\beta\) is a hyperparameter. They upper bounded the suboptimality gap of their $\hat{\pi}_n $ for a specific theoretical choice of \(\beta\).

\textbf{The Use of Implicit Pessimism.} Recently, \citet{gabbianelli2023importance} proposed the use of the \texttt{IX}-estimator in \eqref{eq:regs} in OPL and demonstrated that, with careful analysis, they could obtain tight bounds. They observed that the \texttt{IX}-estimator exhibits asymmetry and thus did not use a single two-sided inequality to derive their bound. Instead, they analyzed each side individually using distinct methods and combined the results to obtain the desired two-sided inequality. In particular, this allowed them to derive an upper bound function \( f \) that depends only on the policy space \(\Pi\), confidence level \(\delta\), and the number of samples \( n \), such that \( f(\delta, \Pi, \pi, \pi_0, n) = f(\delta,\Pi,n) \). This led them to define
\begin{align}
 \hat{\pi}_n &=  \argmin_{\pi \in \Pi} \hat{R}(\pi, S) + f(\delta, \Pi, \pi, \pi_0, n)\,, \\
 &=\argmin_{\pi \in \Pi} \hat{R}(\pi, S) + f(\delta, \Pi, n) = \argmin_{\pi \in \Pi} \hat{R}(\pi, S)\,,\nonumber
\end{align}
where the principle of pessimism becomes equivalent to directly minimizing the estimator since \( f \) does not depend on \(\pi\). This approach is appealing as it avoids computing potentially heavy statistics of the upper bound while still enjoying the benefits of pessimism. However, it requires a careful analysis of the specific IW regularization, whereas we provide a generic bound that holds for any IW regularization. Following \citet{aouali23a}, we directly derive two-sided bounds for regularized IPS, which might be loose depending on the logging policy (\cref{app:bound_tightness}) but still lead to good empirical performance (\cref{sec:experiments}). Investigating similar asymmetric analysis in general regularized IPS is an interesting avenue for future work.

\textbf{Our Approach.} We derive a two-sided generalization bound that holds simultaneously for any policy \(\pi \in \Pi\), as outlined in \eqref{eq:opl_two_sided}. We examine two pessimistic learning principles: directly optimizing the bound or optimizing a simplified penalty inspired by it (heuristic). Both principles apply to any IW regularization, including the standard, non-regularized IPS. Our theory builds on the proof of Pac-Bayesian bounds in \citet{aouali23a}, extending its scope beyond the \texttt{ES} regularization in \eqref{eq:regs} to include other IW regularizations. A limitation of the previous work was the empirical comparison of different pessimistic learning principles, each employing a different IW regularization for IPS. For example, \citet{aouali23a} compared optimizing \texttt{ES}-IPS penalized by their generalization bound with optimizing \texttt{Clip}-IPS penalized by existing bounds (e.g., \citep{sakhi2022pac, london2019bayesian}). Although they demonstrated significant improvements in OPL performance with \texttt{ES}, they did not determine whether these improvements were due to the new IW regularization technique (\texttt{ES} vs. \texttt{Clip}) or the new generalization bound (their bounds vs. those in \citet{sakhi2022pac, london2019bayesian}). This ambiguity motivates our development of a generic generalization bound that applies universally to any IW regularization and also serves as the basis for a generic heuristic inspired by it.

\section{Theoretical Analysis}\label{sec:main_result} 
We derive our PAC-Bayes generalization bound for the regularized IPS estimator \(\hat{R}(\pi)\) in \eqref{eq:reg_ips_policy_value} under the assumption that \(\hat{w}(x, a) = g(\pi(a | x), \pi_0(a | x))\) for any \((x, a) \in \cX \times \cA\), where \(g: [0, 1] \times [0, 1] \to \real^+\). This assumption is broadly applicable and aligns with known IW regularizations. We make it to explicitly clarify the dependence on \(\pi(a|x)\) and purposefully exclude self-normalized IW, where \(\hat{w}(x_i, a_i) = n w(x_i, a_i)/\sum_{j \in [n]} w(x_j, a_j)\). In self-normalized IW, the regularization depends not only on the specific pair \(x_i, a_i\) but also on all other pairs \(x_j, a_j\), which is not supported by our theory.

\subsection{Introduction to PAC-Bayes theory}\label{subsec:pac_bayes_framekwork}

Consider learning problems specified by an instance space denoted as \(\mathcal{Z}\), a hypothesis space \(\mathcal{H}\) consisting of predictors \(h\), and a loss function \(L : \mathcal{H} \times \mathcal{Z} \rightarrow \real\). Assume access to a dataset \(S = (z_i)_{i \in [n]}\), where $z_1,\dots, z_n$ are i.i.d. from an unknown distribution \(\mathbb{D}\). The risk of a hypothesis \(h\) is defined as \(R(h) = \mathbb{E}_{z \sim \mathbb{D}}[L(h, z)]\), while its empirical counterpart is denoted as \(\hat{R}(h, S) = \frac{1}{n} \sum_{i=1}^n L(h, z_i)\).

In PAC-Bayes, our primary focus is to examine the average generalization capabilities under a distribution \(\mathbb{Q}\) on \(\mathcal{H}\) by controlling the difference between the expected risk under \(\mathbb{Q}\) (expressed as \(\mathbb{E}_{h\sim \mathbb{Q}}[R(h)]\)) and the expected empirical risk under \(\mathbb{Q}\) (expressed as \(\mathbb{E}_{h\sim \mathbb{Q}}[\hat{R}(h, S)]\)).

An example of PAC-Bayes generalization bounds originally proposed by \citet{mcallester1998some} is as follows. Assume that the values of \(L(h, z) \in [0,1]\) for any \((h, z) \in \mathcal{H} \times \mathcal{Z}\), and that we have a fixed prior distribution \(\mathbb{P}\) on \(\mathcal{H}\) and a parameter \(\delta\) that falls within \((0, 1)\). Then, with a probability of at least \(1-\delta\) over the sample set \(S\) drawn from \(\mathbb{D}^n\), it holds simultaneously for any distribution \(\mathbb{Q}\) on \(\mathcal{H}\) that
\begin{talign*}
 \mathbb{E}_{h\sim \mathbb{Q}}[R(h)] \leq \mathbb{E}_{h\sim \mathbb{Q}}[\hat{R}(h, S)]
  + \sqrt{ \frac{D_{\mathrm{KL}}(\mathbb{Q} \| \mathbb{P})+\log \frac{2\sqrt{n}}{\delta}}{2n}}\,,
\end{talign*}
where $D_{\mathrm{KL}}$ denotes the Kullback-Leibler divergence. The reader may refer to \citet{alquier2021user} for a comprehensive introduction to PAC-Bayes theory.

\subsection{Generalization Bounds for OPL}\label{subsec:gbound}
Let \(d'\) be a positive integer, and let \(\Theta \subset \mathbb{R}^{d'}\) be a \(d'\)-dimensional parameter space. We parametrize our learning policies as \(\pi_\theta\), defining our space of policies as \(\Pi = \{\pi_\theta; \theta \in \Theta\}\). An example of this is the softmax policy, parameterized as follows
\begin{align}\label{eq:softmax_pac_bayes}
    \pi^{\textsc{sof}}_{\theta}(a | x) &= \frac{\exp(\phi(x)^\top \theta_a)}{\sum_{a^\prime \in \cA}\exp(\phi(x)^\top  \theta_{a^\prime})}\,,
\end{align}
where \(\theta_a \in \mathbb{R}^d\) and consequently \(\theta = (\theta_a)_{a\in \cA} \in \mathbb{R}^{dK}\), with \(d' = dK\). Moreover, let \(\mathbb{Q}\) be a distribution on the parameter space \(\Theta\). Then PAC-Bayes theory allows us to control the quantity \(\left|\mathbb{E}_{\theta \sim \mathbb{Q}}[R(\pi_\theta) - \hat{R}(\pi_\theta, S)]\right|\), where
\begin{align*}
    R(\pi_\theta) &= \mathbb{E}_{x \sim \nu, a \sim \pi_{\theta}(\cdot | x)}[c(x, a)]\,,\\
    \hat{R}(\pi_\theta, S) &= \frac{1}{n} \sum_{i=1}^n \hat{w}_\theta(x_i, a_i)c_i\,,
\end{align*}
with $\hat{w}_{\theta}(x, a) =g( \pi_{\theta}(a | x),\pi_0(a | x))$. We also assume that the costs are deterministic for ease of exposition. The same result holds for stochastic costs. The proof is provided in \cref{proofs:opl}.

\begin{theorem}\label{thm:main_result}
Let \(\lambda > 0\), \(n \ge 1\), \(\delta \in (0, 1)\), and let \(\mathbb{P}\) be a fixed prior on \(\Theta\). The following inequality holds with probability at least \(1 - \delta\) for any distribution \(\mathbb{Q}\) on \(\Theta\):
\begin{align}\label{eq:app_main_inequality_maint}
    &\left|\mathbb{E}_{\theta \sim \mathbb{Q}}[R(\pi_\theta) - \hat{R}(\pi_\theta, S)]\right| \\
    &\qquad \qquad \leq \sqrt{ \frac{{\textsc{kl}}_1(\mathbb{Q})}{2n} }  + \frac{{\textsc{kl}}_2(\mathbb{Q})}{n \lambda } + B_n(\mathbb{Q})  + \frac{\lambda}{2}\bar{V}_n(\mathbb{Q})\,,\nonumber
\end{align}
where \({\textsc{kl}}_1(\mathbb{Q}) = D_{\mathrm{KL}}(\mathbb{Q} \| \mathbb{P}) + \log \frac{4\sqrt{n}}{\delta}\), \({\textsc{kl}}_2(\mathbb{Q}) = D_{\mathrm{KL}}(\mathbb{Q} \| \mathbb{P}) + \log \frac{4}{\delta}\), and
\begin{align*}
    \bar{V}_n(\mathbb{Q}) = \frac{1}{n}\sum_{i=1}^n \mathbb{E}_{\theta \sim \mathbb{Q}}&\big[\mathbb{E}_{a \sim \pi_0(\cdot | x_i)}[\hat{w}_\theta(x_i, a)^2] \\ &+ \hat{w}_\theta(x_i, a_i)^2 c_i^2\big]\,,\\
  \text{and } \,  B_n(\mathbb{Q}) = \frac{1}{n} \sum_{i=1}^{n} \sum_{a \in \cA} \mathbb{E}_{\theta \sim \mathbb{Q}}&\big[|\pi_{\theta}(a | x_i) \\ & - \pi_0(a | x_i) \hat{w}_\theta(x_i, a)|\big]\,.
\end{align*}
\end{theorem}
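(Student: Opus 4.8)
The plan is to isolate the bias introduced by the importance-weight regularization from the statistical fluctuation of the estimator and to control each with a separate PAC-Bayes argument. Writing $R_g(\pi_\theta) = \mathbb{E}_{x\sim\nu,\,a\sim\pi_0(\cdot|x)}[\hat{w}_\theta(x,a)\,c(x,a)]$ for the expectation of the regularized estimator (so that $\mathbb{E}[\hat{R}(\pi_\theta,S)] = R_g(\pi_\theta)$ under i.i.d.\ sampling and deterministic costs), I decompose
\[
 R(\pi_\theta) - \hat{R}(\pi_\theta,S) = \underbrace{\big(R(\pi_\theta) - R_g(\pi_\theta)\big)}_{\text{bias}} + \underbrace{\big(R_g(\pi_\theta) - \hat{R}(\pi_\theta,S)\big)}_{\text{deviation}}.
\]
The triangle inequality then reduces the theorem to a $\mathbb{Q}$-averaged bound on each piece, with the failure budget split as $\delta/2$ for the bias and two halves of $\delta/2$ for the two tails of the deviation; this is exactly what produces the constants $\log\tfrac{4\sqrt n}{\delta}$ and $\log\tfrac{4}{\delta}$.

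For the bias, since $c(x,a)\in[-1,0]$ I bound $|R(\pi_\theta)-R_g(\pi_\theta)| \le \mathbb{E}_{x\sim\nu}\big[\sum_{a\in\mathcal{A}}|\pi_\theta(a|x)-\pi_0(a|x)\hat{w}_\theta(x,a)|\big] =: \beta(\theta)$. The key observation is that $\beta(\theta)$ depends on the data only through $x\sim\nu$ (not through the sampled actions or costs) and that the per-context loss $\ell_\theta(x)=\sum_a|\pi_\theta(a|x)-\pi_0(a|x)\hat{w}_\theta(x,a)|$ lies in $[0,1]$ for the regularizations considered (one checks $0\le\pi_0\hat{w}_\theta\le\pi_\theta$ pointwise, whence $\ell_\theta(x)=1-\sum_a\pi_0(a|x)\hat{w}_\theta(x,a)$). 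Its empirical average equals $B_n(\mathbb{Q})$ after taking $\mathbb{E}_{\theta\sim\mathbb{Q}}$, so applying the McAllester bound quoted in Section~\ref{subsec:pac_bayes_framekwork} to $\ell_\theta$ at confidence $\delta/2$ gives $\mathbb{E}_{\theta\sim\mathbb{Q}}[\beta(\theta)] \le B_n(\mathbb{Q}) + \sqrt{{\textsc{kl}}_1(\mathbb{Q})/(2n)}$, which is precisely the first term together with the $B_n$ term.

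For the deviation I run an exponential-moment PAC-Bayes argument (Donsker--Varadhan change of measure from $\mathbb{Q}$ to the prior $\mathbb{P}$, then Markov) on each of the two tails $\pm(R_g(\pi_\theta)-\hat{R}(\pi_\theta,S))$, each at confidence $\delta/4$, with free parameter $\lambda$. Because $\hat{w}_\theta c \le 0$ is bounded only by $\hat{w}_\theta$, which is itself unbounded, I cannot use a Hoeffding-type step; instead a per-sample inequality of the form $e^{u}\le 1+u+\tfrac{u^2}{2}$ (applied with the sign making $u\le0$) converts each tail into a second-order term. The crucial refinement is to retain this second-order term as an empirical quantity rather than its population expectation: conditioning on the contexts and integrating only the action randomness yields the action-averaged proxy $\mathbb{E}_{a\sim\pi_0(\cdot|x_i)}[\hat{w}_\theta(x_i,a)^2]$ (after bounding $c^2\le1$) for one tail, while keeping the realized square produces $\hat{w}_\theta(x_i,a_i)^2c_i^2$ for the other. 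Upper-bounding each tail's proxy by their sum $\bar{V}_n(\mathbb{Q})$ yields a single expression $\frac{{\textsc{kl}}_2(\mathbb{Q})}{n\lambda}+\frac{\lambda}{2}\bar{V}_n(\mathbb{Q})$ that dominates both tails, hence $\big|\mathbb{E}_{\theta\sim\mathbb{Q}}[R_g(\pi_\theta)-\hat{R}(\pi_\theta,S)]\big|$.

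I expect the main obstacle to be this last step: obtaining the second-order proxy in empirical form (evaluated at the observed $(x_i,a_i,c_i)$) rather than as the unknown population second moment, while simultaneously coping with the unboundedness of the importance weights and the asymmetry between the two tails. This is where the self-bounding exponential-moment inequalities used in prior PAC-Bayesian analyses of regularized IPS \citep{aouali23a, sakhi2022pac} are needed, and where the two structurally different summands of $\bar{V}_n$ originate. Finally, a union bound over the three events and the triangle inequality assemble the four terms.
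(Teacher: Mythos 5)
Your decomposition is not the paper's, and the two places where it differs are exactly where it breaks. The paper splits $\mathbb{E}_{\theta\sim\mathbb{Q}}[R(\pi_\theta)-\hat{R}(\pi_\theta,S)]$ into \emph{three} terms, all anchored at the observed contexts: $I_1$ compares the true risk $R(\pi_\theta)$ to $\frac{1}{n}\sum_i \mathbb{E}_{a\sim\pi_\theta(\cdot|x_i)}[c(x_i,a)]$ (a $[-1,0]$-valued loss, so the Alquier/McAllester bound applies with no extra assumptions); $I_2$ is the bias \emph{conditional on the observed $x_i$}, which is bounded \emph{deterministically} by $B_n(\mathbb{Q})$ using only $|c|\le 1$; and $I_3$ compares $\hat{R}$ to its conditional mean given the contexts, handled by a martingale PAC-Bayes bound (Haddouche--Guedj) over the action randomness only. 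Your first gap is in the bias step: you take the bias at the population level and then need a second PAC-Bayes application to relate $\mathbb{E}_{x\sim\nu}[\ell_\theta(x)]$ to $B_n(\mathbb{Q})$, which forces you to claim $\ell_\theta(x)=\sum_a|\pi_\theta(a|x)-\pi_0(a|x)\hat{w}_\theta(x,a)|\in[0,1]$ via the pointwise inequality $\pi_0\hat{w}_\theta\le\pi_\theta$. The theorem assumes only that $g\ge 0$, and this inequality is false even for one of the paper's four running examples: for \texttt{Har}, $\pi_0(a|x)\hat{w}_\theta(x,a)=\pi_\theta(a|x)/((1-\lambda)w(x,a)+\lambda)>\pi_\theta(a|x)$ whenever $w(x,a)<1$, and one can construct $(\pi_\theta,\pi_0,\lambda)$ with $\ell_\theta(x)>1$ (and for arbitrary nonnegative $g$, $\ell_\theta$ need not be bounded at all). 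The paper never needs any such boundedness because its bias term is an empirical average over the observed contexts, not a population expectation requiring concentration.

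Your second gap is structural and cannot be patched by citing a sharper exponential-moment lemma: because your deviation term is $R_g(\pi_\theta)-\hat{R}(\pi_\theta,S)$ with $R_g$ the \emph{population} mean, any martingale or change-of-measure argument applied to it produces a predictable variation $\sum_i \mathbb{E}[f_i^2\mid\mathcal{F}_{i-1}]$ in which the conditional expectation is taken over a fresh draw $x\sim\nu$ (the data are i.i.d., so the past tells you nothing about $x_i$), i.e.\ an intractable population second moment --- and the realized squares $\big(R_g(\pi_\theta)-\hat{w}_\theta(x_i,a_i)c_i\big)^2$ themselves contain the unknown $R_g$. This is not the theorem's $\bar{V}_n(\mathbb{Q})$, whose action-expectation $\mathbb{E}_{a\sim\pi_0(\cdot|x_i)}[\hat{w}_\theta(x_i,a)^2]$ is evaluated at the \emph{observed} contexts. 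The only way to make that quantity appear is to do what the paper does: condition on the contexts, so that the martingale differences are $f_i(a_i,\pi_\theta)=\mathbb{E}_{a\sim\pi_0(\cdot|x_i)}[\hat{w}_\theta(x_i,a)c(x_i,a)]-\hat{w}_\theta(x_i,a_i)c(x_i,a_i)$, with randomness over $a_i$ alone; then the predictable and realized quadratic variations are both computable from the data and sum (after dropping a nonpositive cross term, using $g\ge0$, and bounding $c^2\le1$) to exactly $\bar{V}_n(\mathbb{Q})$. The price of that conditioning is the extra term your decomposition is missing --- the fluctuation of the \emph{true} risk over contexts --- which the paper controls through $I_1$, where boundedness of the costs is automatic. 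So your three ingredients are the right ones, but they must be reassembled around conditioning on $(x_i)_{i\in[n]}$; as written, both your bias step and your variance proxy fail to produce the stated (empirical) $B_n$ and $\bar{V}_n$ at the theorem's level of generality.
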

Generally, the bound is tractable due to the conditioning on the contexts \((x_i)_{i \in [n]}\), allowing us to bypass the need for computing the unknown expectation \(\mathbb{E}_{x \sim \nu}[\cdot]\). Recall that the prior \(\mathbb{P}\) is any fixed distribution over \(\Theta\). In particular, if a \(\theta_0\) exists such that the logging policy \(\pi_0 = \pi_{\theta_0}\), then \(\mathbb{P}\) can be specified as Gaussian with mean \(\theta_0\) and some covariance. The terms \(\textsc{kl}_1(\mathbb{Q})\) and \(\textsc{kl}_2(\mathbb{Q})\) contain the divergence \(D_{\mathrm{KL}}(\mathbb{Q} \| \mathbb{P})\), which penalizes posteriors \(\mathbb{Q}\) that deviate significantly from the prior \(\mathbb{P}\). The latter can be computed in closed-form if both $\mathbb{P},\mathbb{Q}$ are Gaussian for instance. 
Moreover, \(B_n(\mathbb{Q})\) represents the bias introduced by the IW regularization, given contexts \((x_i)_{i \in [n]}\); \(B_n(\mathbb{Q}) = 0\) when \(\hat{w}_\theta(x, a) = w(x, a)\) (no IW regularization) and \(B_n(\mathbb{Q}) > 0\) otherwise. The first term in \(\bar{V}_n(\mathbb{Q})\) resembles the theoretical second moment of the regularized IWs \(\hat{w}_\theta(x, a)\) (without the cost) when viewed as random variables, while the second term resembles the empirical second moment of \(\hat{w}_\theta(x, a) c\) (with the cost). If \(\bar{V}_n(\mathbb{Q})\) is bounded (which is the case for all IW regularizations in \cref{sec:regularizations} except \texttt{ES}), we can set \(\lambda = 1/\sqrt{n}\), resulting in a \(\mathcal{O}(1/\sqrt{n} + B_n(\mathbb{Q}))\) bound.

\textbf{Linear vs. Non-linear IW Regularization.} If \(\hat{w}(x, a)\) is linear in \(\pi_{\theta}(x, a)\) (i.e., $g$ linear in its first variable), then \(\hat{R}\) is also linear in \(\pi_\theta\), yielding
\begin{align*}
    \left|\mathbb{E}_{\theta \sim \mathbb{Q}}[R(\pi_{\theta}) - \hat{R}(\pi_{\theta}, S)]\right| = \left|R(\pi_{\mathbb{Q}}) - \hat{R}(\pi_{\mathbb{Q}}, S)\right|\,,
\end{align*}
where we define
\begin{align}\label{eq:pac_bayes_policy}
    \pi_{\mathbb{Q}} = \mathbb{E}_{\theta \sim \mathbb{Q}}[\pi_{\theta}]\,.
\end{align}
This technique is widely used in the literature \citep{london2019bayesian, sakhi2022pac, aouali23a} because it allows translating the bound in \cref{thm:main_result}, which controls \(\left|\mathbb{E}_{\theta \sim \mathbb{Q}}[R(\pi_{\theta}) - \hat{R}(\pi_{\theta}, S)]\right|\), into a bound that controls \(|R(\pi_{\mathbb{Q}}) - \hat{R}(\pi_{\mathbb{Q}}, S)|\), the quantity of interest in OPL. The main requirement is to find linear IW regularizations and policies that satisfy \eqref{eq:pac_bayes_policy}. Fortunately, many IW regularizations, such as \texttt{Clip}, \texttt{IX}, and \texttt{ES} in \eqref{eq:regs}, are linear in \(\pi\), and several practical policies adhere to the formulation in \eqref{eq:pac_bayes_policy}; refer to \citet[Section 4.2]{aouali23a} for an in-depth explanation of such policies, including softmax, mixed-logit, and Gaussian policies. In fact, \citet{sakhi2022pac} demonstrated that any policy can be written as \eqref{eq:pac_bayes_policy}.

In \cref{corr:lin_reg_main}, we specialize \cref{thm:main_result} under linear IW regularizations of the form \(\hat{w}_\theta(x, a) = \frac{\pi_\theta(a|x)}{h(\pi_0(a|x))}\), assuming \(h(\pi_0(a|x)) \geq \pi_0(a|x)\) for any \((x, a) \in \cX \times \cA\). Additionally, we assume that \(\pi_\theta\) is binary, meaning \(\pi_\theta(a \mid x) \in \{0, 1\}\) for any \((x, a) \in \cX \times \cA\). In other words, \(\pi_\theta\) is deterministic, allowing us to use \(\pi_\theta(a | x)^2 = \pi_\theta(a | x)\) for any \((x, a) \in \cX \times \cA\). Essentially, the policies \(\pi_{\mathbb{Q}}\) defined in \eqref{eq:pac_bayes_policy} can be viewed as a mixture of deterministic policies under \(\mathbb{Q}\). Note that this assumption on \(\pi_\theta\) being binary is mild. For instance, policies $ \pi_{\mathbb{Q}}$ that can be written as mixtures of binary $pi_{\theta}$ include softmax, mixed-logit, and Gaussian policies \citep[Section 4.2]{aouali23a}. Under these assumptions, \cref{thm:main_result} yields the following result.

\begin{corollary}\label{corr:lin_reg_main} Assume the regularized IWs can be written as \(\hat{w}_\theta(x, a) = \frac{\pi_\theta(a|x)}{h(\pi_0(a|x))}\) with $h:[0,1]\to \mathbb{R}^+$ verifies $h(p) \geq p$ for any $p \in [0, 1]$. Moreover, for any distribution $\mathbb Q$ in the parameter space $\Theta$, we define $\pi_{\mathbb{Q}} = \mathbb{E}_{\theta \sim \mathbb{Q}}[\pi_{\theta}]$ where $\pi_{\theta}$ is binary. Then, let $\lambda>0$,  $n \ge 1$, $\delta \in (0, 1)$, and let $\mathbb{P}$ be a fixed prior on $\Theta$,
The following inequality holds with probability at least $1-\delta$ for any distribution $\mathbb{Q}$ on $\Theta$
\begin{align}
    \Big|R(&\pi_{\mathbb{Q}})-\hat{R}(\pi_{\mathbb{Q}}, S)\Big|  \\  & \le \sqrt{ \frac{{\textsc{kl}}_{1}(\mathbb{Q})}{2n} } + B_n(\pi_{\mathbb{Q}})  +
\frac{{\textsc{kl}}_{2}(\mathbb{Q})}{n \lambda } + \frac{\lambda}{2}\bar{V}_n(\pi_{\mathbb{Q}})\,,\nonumber
\end{align}
where ${\textsc{kl}}_{1}(\mathbb{Q})$ and  ${\textsc{kl}}_{2}(\mathbb{Q})$ are defined in \cref{thm:main_result}, and
\begin{align*}
    &\bar{V}_n(\pi_{\mathbb{Q}}) = \frac{1}{n}\sum_{i=1}^n  \E{a \sim \pi_0(\cdot | x_i)}{\frac{\pi_{\mathbb{Q}}(a |x_i)}{h(\pi_{0}(a |x_i))^2}} \\
    &\hspace{1.9in}+\frac{\pi_{\mathbb{Q}}(a_i |x_i)}{h(\pi_{0}(a_i |x_i))^2} c_i^2\,,\\
   &B_n(\pi_{\mathbb{Q}}) = 1 - \frac{1}{n} \sum_{i=1}^{n} \sum_{a \in \cA}\pi_0(a | x_i)\frac{\pi_{\mathbb{Q}}(a |x_i)}{h(\pi_{0}(a |x_i))}\,.
\end{align*}
\end{corollary}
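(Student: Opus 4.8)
The plan is to specialize \Cref{thm:main_result} by re-expressing each of its three structural quantities---the left-hand side, the variance proxy $\bar{V}_n(\mathbb{Q})$, and the bias term $B_n(\mathbb{Q})$---under the two extra hypotheses: that $\hat{w}_\theta(x,a) = \pi_\theta(a|x)/h(\pi_0(a|x))$ is linear in $\pi_\theta$ with $h(p) \ge p$, and that each $\pi_\theta$ is binary. Since \Cref{thm:main_result} already holds with probability at least $1-\delta$ simultaneously for every distribution $\mathbb{Q}$ on $\Theta$, it suffices to rewrite each term on the event of the theorem; no new concentration argument is required.

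First I would handle the left-hand side. Both $R(\pi_\theta)$ and $\hat{R}(\pi_\theta, S)$ are linear in the map $a \mapsto \pi_\theta(a|x)$: the risk equals $\mathbb{E}_{x\sim\nu}[\sum_{a} \pi_\theta(a|x) c(x,a)]$ and the regularized estimator equals $\frac{1}{n}\sum_i \pi_\theta(a_i|x_i) c_i / h(\pi_0(a_i|x_i))$. Linearity of expectation then lets me push $\mathbb{E}_{\theta\sim\mathbb{Q}}$ inside, giving $\mathbb{E}_{\theta\sim\mathbb{Q}}[R(\pi_\theta)] = R(\pi_{\mathbb{Q}})$ and $\mathbb{E}_{\theta\sim\mathbb{Q}}[\hat{R}(\pi_\theta,S)] = \hat{R}(\pi_{\mathbb{Q}},S)$ with $\pi_{\mathbb{Q}} = \mathbb{E}_{\theta\sim\mathbb{Q}}[\pi_\theta]$ as in \eqref{eq:pac_bayes_policy}; this collapses $|\mathbb{E}_{\theta\sim\mathbb{Q}}[R(\pi_\theta)-\hat{R}(\pi_\theta,S)]|$ into the target quantity $|R(\pi_{\mathbb{Q}})-\hat{R}(\pi_{\mathbb{Q}},S)|$.

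Next, for $\bar{V}_n(\mathbb{Q})$ I would use that $\pi_\theta$ is binary, so $\pi_\theta(a|x)^2 = \pi_\theta(a|x)$ and hence $\hat{w}_\theta(x,a)^2 = \pi_\theta(a|x)/h(\pi_0(a|x))^2$ is again linear in $\pi_\theta$; taking $\mathbb{E}_{\theta\sim\mathbb{Q}}$ replaces $\pi_\theta$ by $\pi_{\mathbb{Q}}$ in both summands, yielding exactly $\bar{V}_n(\pi_{\mathbb{Q}})$. For $B_n(\mathbb{Q})$ I would factor $\pi_\theta(a|x_i) - \pi_0(a|x_i)\hat{w}_\theta(x_i,a) = \pi_\theta(a|x_i)\,\bigl(1 - \pi_0(a|x_i)/h(\pi_0(a|x_i))\bigr)$; the hypothesis $h(p)\ge p$ makes the parenthesis nonnegative and $\pi_\theta \ge 0$, so the absolute value can be dropped, after which $\mathbb{E}_{\theta\sim\mathbb{Q}}$ again turns $\pi_\theta$ into $\pi_{\mathbb{Q}}$. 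Splitting the resulting double sum and using that $\pi_{\mathbb{Q}}$ is a genuine probability distribution (a mixture of the $\pi_\theta$), so that $\sum_{a\in\cA}\pi_{\mathbb{Q}}(a|x_i)=1$, turns the first piece into $1$ and leaves precisely $B_n(\pi_{\mathbb{Q}})$.

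The only genuinely delicate point is the sign analysis that removes the absolute value in $B_n$: it relies on the structural assumption $h(\pi_0(a|x)) \ge \pi_0(a|x)$, which holds for \texttt{Clip}, \texttt{IX}, and \texttt{ES} but would fail for regularizers that can inflate small propensities, so this is where the hypotheses of the corollary are actually consumed. Everything else is linearity of expectation together with the binary identity $\pi_\theta(a|x)^2=\pi_\theta(a|x)$ and the normalization of $\pi_{\mathbb{Q}}$, all of which are routine once the left-hand side has been rewritten.
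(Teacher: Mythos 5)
Your proposal is correct and takes essentially the same route as the paper: the corollary is obtained there by specializing \cref{thm:main_result} exactly as you describe, collapsing the left-hand side via linearity of $R$ and $\hat{R}$ in $\pi_\theta$, using the binary identity $\pi_\theta(a|x)^2=\pi_\theta(a|x)$ to make $\hat{w}_\theta^2$ linear so that $\mathbb{E}_{\theta\sim\mathbb{Q}}$ turns $\pi_\theta$ into $\pi_{\mathbb{Q}}$ in $\bar{V}_n$, and dropping the absolute value in $B_n$ via $h(p)\ge p$ followed by the normalization $\sum_{a\in\cA}\pi_{\mathbb{Q}}(a|x_i)=1$. Your identification of the sign argument as the step that actually consumes the hypothesis $h(p)\ge p$ matches the paper's reasoning precisely.
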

The terms in the above bound have similar interpretations to those in \cref{thm:main_result}. The main benefit of \cref{corr:lin_reg_main} is that it eliminates the need for the expectation \(\E{\theta \sim \mathbb{Q}}{\cdot}\), which is now embedded in the definition of policies in \eqref{eq:pac_bayes_policy}. For example, \cref{corr:lin_reg_main} allows us to recover the main result of \texttt{ES} in \citet{aouali23a} when \(h(p) = p^\alpha\), \(\alpha \in [0, 1]\). Similarly, we can apply it to \texttt{IX} \citep{gabbianelli2023importance} by setting \(h(p) = p + \gamma\), \(\gamma \geq 0\), and to \texttt{Clip} \citep{london2019bayesian} by setting \(h(p) = \max(p, \tau)\), \(\tau \in [0, 1]\).

Finally, if \(\hat{w}_\theta(x, a)\) is not linear in \(\pi_{\theta}\), then this technique cannot be used, and the original expectation \(\mathbb{E}_{\theta \sim \mathbb{Q}}[\cdot]\) in \cref{thm:main_result} must be retained.

\textbf{Limitations.} This bound has two main limitations. \textbf{1)} Despite its broad applicability, directly applying \cref{thm:main_result} to bound the suboptimality gap of the learned policy-specifically, to bound \(R(\hat{\pi}_n) - R(\pi_*)\), where \(\pi_* = \argmin_{\pi \in \Pi} R(\pi)\) is the optimal policy and \(\hat{\pi}_n\) is learned by optimizing the bound-is not straightforward. To illustrate, consider the linear IW regularization case in \cref{corr:lin_reg_main} and suppose that \(\pi_*\) can be expressed as \(\pi_* = \pi_{\mathbb{Q}_*}\), where \(\mathbb{Q}_* = \argmin_{\mathbb{Q}} R(\pi_{\mathbb{Q}})\) is the optimal distribution. In this scenario, the suboptimality gap would be bounded by the upper bound in \cref{corr:lin_reg_main}, evaluated at the optimal distribution \(\mathbb{Q} = \mathbb{Q}_*\). However, the scaling of this suboptimality bound with \(n\) is not immediately evident for general IW regularizers and requires individual examination for each IW regularization. This is because the bound contains numerous empirical (data-dependent) terms such as \(B_n(\pi_{\mathbb{Q}_*})\) and \(\bar{V}_n(\pi_{\mathbb{Q}_*})\) that are not easily transformed into data-independent terms that scale as \(\mathcal{O}(1/\sqrt{n})\). Nonetheless, the versatility, tractability, and proven empirical benefits of our bound (\cref{sec:experiments}) make it appealing. \textbf{2)} It has been noted that directly deriving two-sided bounds for IW estimators might be loose because they treat both tails similarly, whereas prior work \citep{gabbianelli2023importance} indicates essential differences between the lower and upper tails, as seen in the \texttt{IX}-estimator \citep{gabbianelli2023importance}. This work directly derives two-sided bounds for general regularized IPS. Investigating whether bounding each side individually could lead to terms that are easier to interpret and solve the above problem is an interesting direction for future research.

\subsection{Pessimistic Learning Principles}\label{subsec:lp}

\cref{thm:main_result} yields two pessimistic learning principles. 

\textbf{Bound Optimization.} First, one can directly learn a $\hat{\pi}_n$ that optimizes the bound of \Cref{thm:main_result} as follows
\begin{align}\label{eq:objective_pac_bayes}
   \argmax_{\mathbb{Q}} \E{\theta \sim \mathbb{Q}}{\hat{R}(\pi_{\theta}, S)} + \sqrt{ \frac{{\textsc{kl}}_{1}(\mathbb{Q})}{2n} } + B_n(\mathbb{Q}) \nonumber \\  +
\frac{{\textsc{kl}}_{2}(\mathbb{Q})}{n \lambda } + \frac{\lambda}{2}\bar{V}_n(\mathbb{Q})\,,
\end{align}
Here, the main challenge is that the objective involves an expectation under $\mathbb{Q}$. Fortunately, the reparameterization trick \citep{kingma2015variational} can be used in this case. This trick allows us to express a gradient of an expectation as an expectation of a gradient, which can then be estimated using the empirical mean (Monte Carlo approximation). In our case, we use the \emph{local} reparameterization trick \citep{kingma2015variational}, known for reducing the variance of stochastic gradients. Specifically, we consider softmax policies $\pi^{\textsc{sof}}_{\theta}(a | x)$ in \eqref{eq:softmax_pac_bayes} and set $\mathbb{Q} =  \mathcal{N}\left(\mu, \sigma^2 I_{dK}\right)$ where $\mu \in \mathbb{R}^{dK}$ and $\sigma > 0$ are learnable parameters. Then, all terms in \eqref{eq:objective_pac_bayes} are of the form $\E{\theta \sim \mathcal{N}\left(\mu, \sigma^2 I_{dK}\right)}{f(\pi^{\textsc{sof}}_{\theta}(a | x))}$ for some function $f$. These terms can be rewritten as
\begin{talign*}
  &\E{\theta \sim \mathcal{N}\left(\mu, \sigma^2 I_{dK}\right)}{f(\pi^{\textsc{sof}}_{\theta}(a | x))} \\
  &= \E{\epsilon \sim \mathcal{N}(0, \|\phi(x)\|_2^2 I_{K})}{f\left(\frac{\exp(\phi(x)^\top \mu_a + \sigma \epsilon_a)}{\sum_{a^\prime \in \cA}\exp(\phi(x)^\top  \mu_{a^\prime} + \sigma \epsilon_{a^\prime})}\right)}\,.
\end{talign*}
This expectation is approximated by generating i.i.d. samples \(\epsilon_i \sim \mathcal{N}(0, \|\phi(x)\|_2^2 I_{K})\) and computing the corresponding empirical mean. The gradients are approximated similarly (\cref{app:bound_opt}). Unfortunately, these techniques can induce high variance when the number of actions \(K\) is large. This can be mitigated by considering linear IW regularizations and optimizing the bound in \cref{corr:lin_reg_main}. However, this technique only works for linear IW regularizations. Therefore, we propose another practical learning principle inspired by our bound in \Cref{thm:main_result}, which enhances performance at the cost of additional hyperparameters.

\textbf{Heuristic Optimization.} The following heuristic avoids the obstacles of directly optimizing the bound, at the cost of introducing some hyperparameters, while still being inspired by \Cref{thm:main_result}. This approach involves minimizing the estimated risk $\hat{R}$, penalized by its associated bias and variance terms from \Cref{thm:main_result}, along with a proximity term to the logging policy $\pi_0 = \pi_{\theta_0}$ such as
\begin{align}\label{eq:learning_principle}
\hspace{-0.2cm} \hat{R}(\pi_{\theta}, S) + \lambda_1 \|\theta - \theta_0\|^2  + \lambda_2  \tilde{V}_n(\pi_{\theta}) + \lambda_3 \tilde{B}_n(\pi_{\theta})\,,
\end{align}
where $\tilde{V}_n(\pi_{\theta})$ and $\tilde{B}_n(\pi_{\theta})$ are the terms inside the expectations in $\bar{V}_n(\mathbb{Q})$ and $B_n(\mathbb{Q})$, respectively, $\theta_0$ is the parameter of $\pi_0$, and $\lambda_1, \lambda_2, \lambda_3$ are tunable hyperparameters.

Both learning principles in \eqref{eq:objective_pac_bayes} and \eqref{eq:learning_principle} are suitable for stochastic gradient descent. They are also generic, enabling the comparison of different IW regularization techniques given a fixed, observed logged data $S$. In \Cref{sec:experiments}, we will empirically compare these two learning principles and evaluate the effect of different IW regularization techniques.

\subsection{Sketch of proof for the main result}
Our goal is to bound $\E{\theta \sim \mathbb{Q}}{R(\pi_{\theta})-\hat{R}(\pi_{\theta}, S)}$. To achieve this, we decompose it into three terms as follows
\begin{align*}
    \mathbb{E}_{\theta \sim \mathbb{Q}}[R(\pi_{\theta})-\hat{R}(\pi_{\theta}, S)] = I_1 + I_2 + I_3\,,
\end{align*}
Next, we explain the terms $I_1$, $I_2$, and $I_3$ and the rationale for their introduction. 

First, $I_1 = \E{\theta \sim \mathbb{Q}}{R(\pi_{\theta}) - \frac{1}{n}\sum_{i=1}^n R(\pi_{\theta} | x_i)}$, where $R(\pi_{\theta} | x_i) = \E{a \sim \pi_{\theta}(\cdot | x_i)}{c(x_i, a)}$, represents the risk given context $x_i$. This term captures the estimation error of the empirical mean of the risk using $n$ i.i.d. contexts $(x_i)_{i \in [n]}$. It is introduced to avoid the intractable expectation over $x \sim \nu$, thereby leading to a tractable bound that can be directly used in our pessimistic learning principle.

Second, $I_2 = \frac{1}{n} \sum_{i=1}^n \E{\theta \sim \mathbb{Q}}{R(\pi_{\theta} | x_i) - \Rg(\pi_{\theta} | x_i)}$, with $\Rg(\pi_{\theta} | x_i) = \E{a \sim \pi_0(\cdot | x_i)}{\hat{w}_\theta(x_i, a)c(x_i, a)}$, represents the expectation of the risk estimator given $x_i$. This term is a bias term conditioned on the contexts $(x_i)_{i \in [n]}$, and its absolute value can be bounded by tractable terms.

Finally, $I_3 = \frac{1}{n}\sum_{i=1}^n \E{\theta \sim \mathbb{Q}}{\Rg(\pi_{\theta} | x_i) - \hat{R}(\pi_{\theta}, S)}$ represents the estimation error of the risk conditioned on the contexts $(x_i)_{i \in [n]}$. This conditioning allows us to avoid the unknown expectation over $x \sim \nu$, making it possible to bound $|I_3|$ by tractable terms.

These terms are bounded as follows.
\textbf{1)} \citet[Theorem~3.3]{alquier2021user} allows bounding $I_1$ with high probability as $|I_1| \leq \sqrt{\frac{{\textsc{kl}}_{1}(\mathbb{Q})}{2n}}$.
\textbf{2)} Using the fact that $|c(x, a)| \leq 1$ for any $(x, a) \in \cX \times \cA$, $|I_2|$ can be bounded as $|I_2| \leq B_n(\mathbb{Q})$.
\textbf{3)} Bounding $|I_3|$ is more challenging. We manage this by expressing the term using martingale difference sequences and adapting \citep[Theorem 2.1]{haddouche2022pac}. Let $(\mathcal{F}_i)_{i \in \{0\} \cup [n]}$ be a filtration adapted to $(S_i)_{i \in [n]}$ where $S_i = (a_\ell)_{\ell \in [i]}$ for any $i \in [n]$. Then define
\begin{align*}
f_i\left(a_i, \pi_{\theta}\right) = \E{a \sim \pi_0(\cdot | x_i)}{\hat{w}_\theta(x_i, a)c(x_i, a)}  \\-\hat{w}_\theta(x_i, a_i)c(x_i, a_i)\,.
\end{align*}
We show that for any $\theta \in \Theta$, $(f_i(a_i, \pi_{\theta}))_{i \in [n]}$ forms a martingale difference sequence, which yields
\begin{align*}
    \left|\E{\theta \sim \mathbb{Q}}{M_n(\theta)}\right| &\leq \frac{{\textsc{kl}}_2(\mathbb{Q})}{\lambda} + n\frac{\lambda}{2}\bar{V}_n(\mathbb{Q})\,,
\end{align*}
with high probability. Recognizing that $\E{\theta \sim \mathbb{Q}}{M_n(\theta)} = nI_3$, we derive the desired inequality
\begin{align*}
  \left|I_3\right| &\leq \frac{{\textsc{kl}}_2(\mathbb{Q})}{n \lambda} + \frac{\lambda}{2}\bar{V}_n(\mathbb{Q})\,.
\end{align*}
Our final bound is obtained by combining the previous inequalities on $|I_1|$, $|I_2|$, and $|I_3|$.

\section{Experiments}\label{sec:experiments}
We present our core experiments in this section. Details and additional experiments are provided in \cref{sec:experiments_details}, where we also discuss the tightness of our bound in \cref{app:bound_tightness}. Our code is publicly available on \href{https://github.com/imadaouali/unified-pessimism-opl}{GitHub}.
\subsection{Setting}
We adopt a similar setting to \citet{sakhi2022pac}. We begin with a supervised training set \(\mathcal{S}^{\textsc{tr}}\) and convert it into logged bandit data \(S\) using the standard supervised-to-bandit conversion method \citep{agarwal2014taming}. In this conversion, the label set \(\cA\) serves as the action space, while the input space serves as the context space \(\cX\). We then use \(S\) to train our policies. After training, we evaluate the reward of the learned policies on the supervised test set \(\mathcal{S}^{\textsc{ts}}\). The resulting reward measures the ability of the learned policy to predict the true labels of the inputs in the test set and serves as our performance metric. We use two image classification datasets for this purpose: \texttt{MNIST} \citep{lecun1998gradient} and \texttt{FashionMNIST} \citep{xiao2017fashion}. Although we also explored the \texttt{EMNIST} dataset, it led to similar conclusions, so we did not include it to reduce clutter.

We define the logging policy as $\pi_0 = \pi_{\eta_0 \cdot \mu_0}^{\textsc{sof}}$ as in \eqref{eq:softmax_pac_bayes},
\begin{align}
    \pi^{\textsc{sof}}_{\eta_0 \cdot \mu_0}(a | x) &= \frac{\exp(\eta_0\phi(x)^\top \mu_{0,a})}{\sum_{a^\prime \in \cA}\exp(\eta_0 \phi(x)^\top  \mu_{0, a^\prime})}\,,
\end{align}
where \(\mu_0 = (\mu_{0,a})_{a \in \cA} \in \mathbb{R}^{dK}\) are learned using 5\% of the training set \(\mathcal{S}^{\textsc{tr}}\). The parameter \(\eta_0 \in \mathbb{R}\) is an inverse-temperature parameter that controls the quality of the logging policy \(\pi_0\). Higher values of \(\eta_0\) lead to a better-performing logging policy, while lower values lead to a poorer-performing logging policy. In particular, \(\eta_0 = 0\) corresponds to a uniform logging policy. We set the prior as \(\mathbb{P} = \mathcal{N}(\eta_0 \mu_0, I_{dK})\) in all PAC-Bayesian learning principles considered in these experiments, including ours. We train policies on the remaining 95\% of \(\mathcal{S}^{\textsc{tr}}\) using Adam \citep{kingma2014adam} with a learning rate of 0.1 for 20 epochs. The training objective for learning the policy varies based on the chosen method: we use our theoretical bound in \eqref{eq:objective_pac_bayes}, our proposed heuristic in \eqref{eq:learning_principle}, or other pessimism learning principles found in the literature.

We consider two main experiments. In \cref{subsec:fixed_iw}, we focus on a common IW regularization technique, specifically \texttt{Clip} in \eqref{eq:regs}. We then apply PAC-Bayesian learning principles from the literature that were specifically designed for \texttt{Clip} and compare them with ours applied to \texttt{Clip}. The goal is to demonstrate that our learning principle not only has broader applicability but also outperforms existing ones. After validating the improved performance of our PAC-Bayesian learning principle, we proceed in \cref{subsec:fixed_lp} to compare existing IW regularizations by training policies using our learning principles applied to them. The goal of these experiments is to determine whether there is a particular IW regularization technique that yields improved performance in OPL.

\subsection{Comparing Learning Principles Under a Common IW Regularization}\label{subsec:fixed_iw}
Here, we focus on the impact of different pessimistic learning principles on the performance of the learned policy given a fixed IW regularization method, specifically \texttt{Clip} as defined in \eqref{eq:regs}. Recall that \texttt{Clip} regularizes the IW as \(\hat{w}(x, a) = \frac{\pi(a|x)}{\max(\pi_0(a|x), \tau)}\), with \(\tau\) set to \(1/\sqrt[4]{n}\) following the suggestion in \citet{ionides2008truncated}. To ensure a fair comparison, we consider PAC-Bayesian learning principles from the literature where the theoretical bound was optimized. Specifically, we include two PAC-Bayesian bounds proposed prior to our work for \texttt{Clip}, from \citet{london2019bayesian} and \citet{sakhi2022pac}. We label the baselines as \emph{London et al.} for optimizing the bound from \citet[Theorem 1]{london2019bayesian}, and for \citet{sakhi2022pac}, we distinguish their two bounds as \emph{Sakhi et al. 1} (from \citet[Proposition 1]{sakhi2022pac}, based on \citet{catoni2007pac}) and \emph{Sakhi et al. 2} (from \citet[Proposition 3]{sakhi2022pac}, a Bernstein-type bound). Since both \citet{london2019bayesian} and \citet{sakhi2022pac} used the linear IW regularization trick described in \cref{sec:main_result}, we compare their methods with optimizing our bound in \eqref{corr:lin_reg_main}, a direct consequence of \cref{thm:main_result} when the IW regularization is linear in \(\pi\). Since we use linear IW regularizations, we optimize over Gaussian policies as described in \citet{aouali23a} and briefly discussed in \cref{app:linear_reg}, as these are known to perform better in these scenarios \citep{sakhi2022pac,aouali23a}. Finally, we also include the logging policy as a baseline.

In \cref{fig:sota}, the reward achieved by the learned policy is plotted as a function of the quality (i.e., performance) of the logging policy, \(\eta_0 \in [0, 1]\). This comparison is conducted for learned policies that were optimized using one of the pessimistic learning principles above. The results demonstrate that ours outperforms all baselines across a wide range of logging policies. Thus, in addition to being generic and applicable to a large family of IW regularizers, our approach proves to be more effective than objectives tailored for specific IW regularizations. The enhanced performance of our method holds when \(\eta_0\) is not very close to zero, a more realistic scenario in practical settings where the logging policy typically outperforms a uniform policy. Additionally, note that the performance of the learned policy using any method (including ours) improves upon the performance of the logging policy (indicated by dashed black lines).

Finally, we also conducted an experiment comparing our learning principle, \textbf{Heuristic Optimization}, with the \(L_2\) heuristic from \citet{london2019bayesian}. We found that both heuristics had identical performance (\cref{app:heuristic_comp}).

\begin{figure}
  \centering  \includegraphics[width=0.5\textwidth]{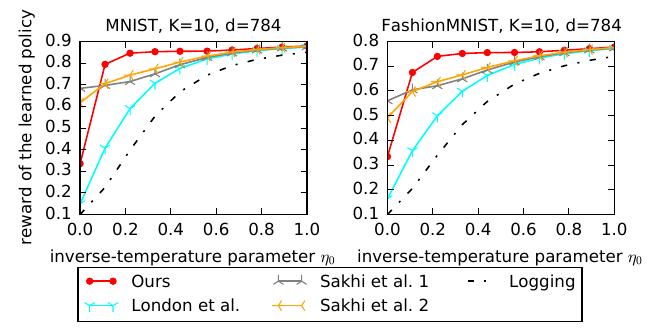}
  \caption{Performance of the learned policy with different PAC-Bayes pessimistic learning principles (our \cref{corr:lin_reg_main} and those in \citet{london2019bayesian,sakhi2022pac}) using the \texttt{Clip} IPS risk estimator in \eqref{eq:regs}
.} 
  \label{fig:sota}
\end{figure}

\begin{figure*}[ht]
  \centering  \includegraphics[width=\linewidth]{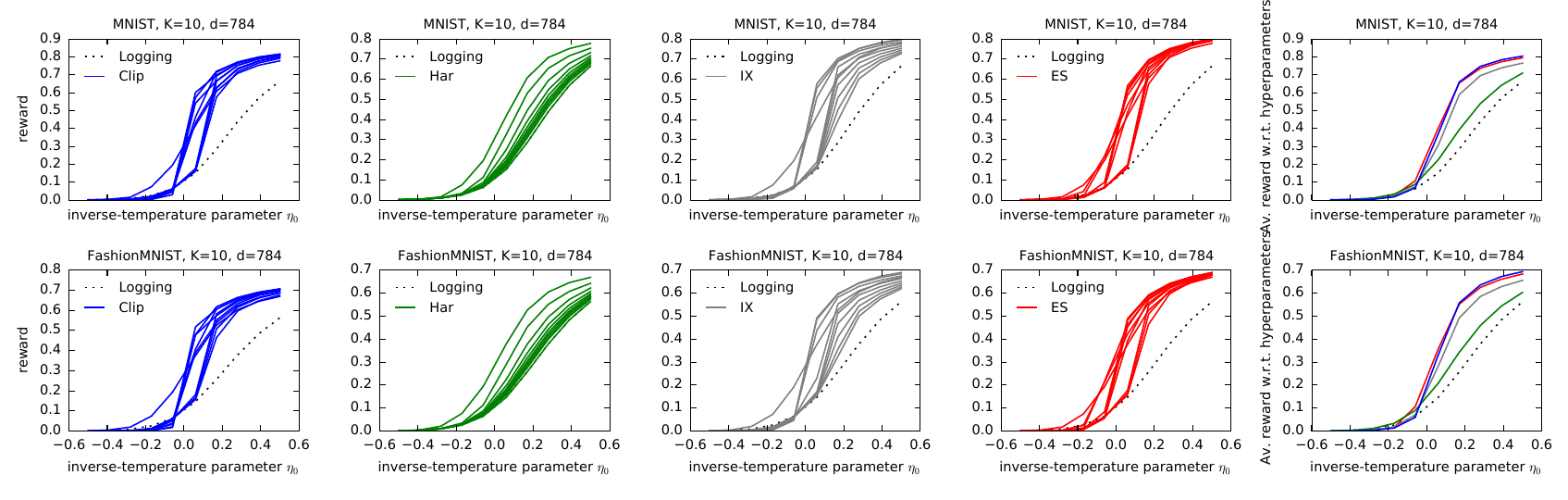}
  \caption{Performance of the policy learned by \textbf{Bound Optimization} \eqref{eq:objective_pac_bayes} for different IW regularizations. The \(x\)-axis reflects the quality of the logging policy \(\eta_0 \in [-0.5, 0.5]\). In the first four columns, we plot the reward of the learned policy using a fixed IW regularization technique (\texttt{Clip}, \texttt{Har}, \texttt{IX}, or \texttt{ES} as defined in \eqref{eq:regs}) for various values of its hyperparameter within \([0,1]\). In the last column, we report the mean reward across these hyperparameter values.} 
  \label{fig:main_exp_results}
\end{figure*}

\begin{figure*}[ht]
  \centering  \includegraphics[width=\linewidth]{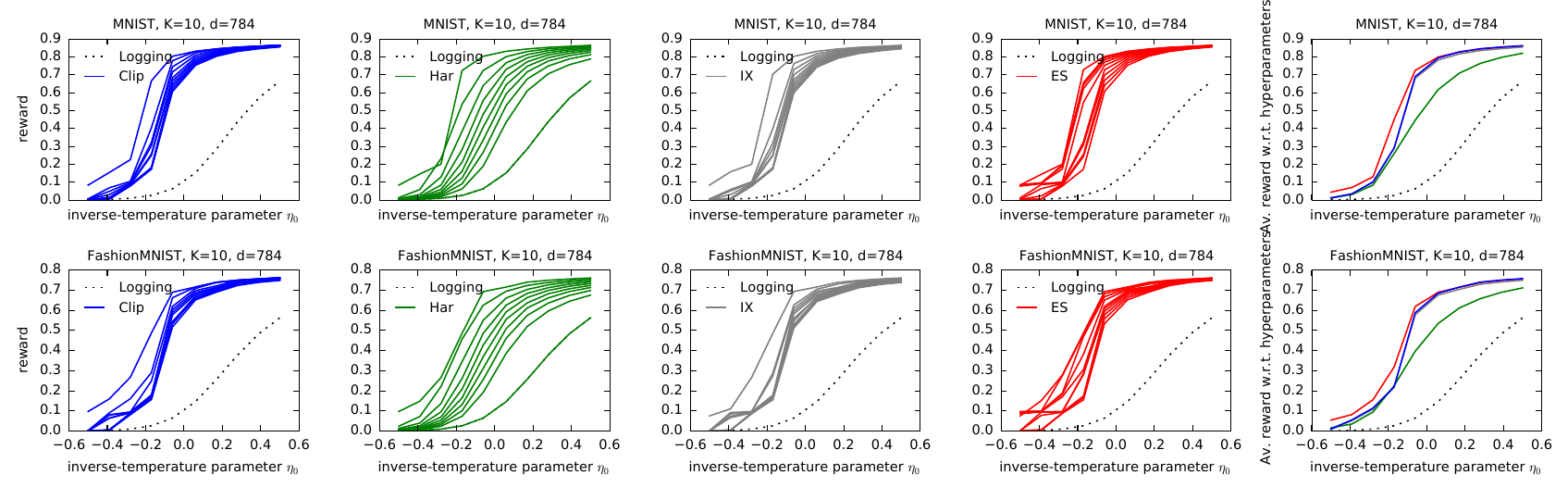}
  \caption{Performance of the policy learned by \textbf{Heuristic Optimization} \eqref{eq:learning_principle} for different IW regularizations. The \(x\)-axis reflects the quality of the logging policy \(\eta_0 \in [-0.5, 0.5]\). In the first four columns, we plot the reward of the learned policy using a fixed IW regularization technique (\texttt{Clip}, \texttt{Har}, \texttt{IX}, or \texttt{ES} as defined in \eqref{eq:regs}) for various values of its hyperparameter within \([0,1]\). In the last column, we report the mean reward across these hyperparameter values.
  } 
  \label{fig:main_exp_results2}
\end{figure*}

\subsection{Comparing IW Regularizations Under a Common Learning Principle}\label{subsec:fixed_lp}
After demonstrating the favorable performance of our approach in the previous section, we now evaluate its performance with different IW regularization techniques. Specifically, we consider \texttt{Clip}, \texttt{Har}, \texttt{IX}, and \texttt{ES} as defined in \eqref{eq:regs}. We employ both learning principles: one that directly optimizes the theoretical bound and another that optimizes the heuristic derived from it. For the bound optimization, we cannot use \cref{corr:lin_reg_main} since it includes a non-linear IW regularization (\texttt{Har}). Instead, we optimize the bound in \cref{thm:main_result} as explained in the \textbf{Bound Optimization} paragraph in \cref{subsec:lp}. For the heuristic optimization, we use the method described in the \textbf{Heuristic Optimization} paragraph in \cref{subsec:lp}. In this context, we optimize over softmax policies defined as
\begin{align}
    \pi^{\textsc{sof}}_{\theta}(a | x) &= \frac{\exp(\phi(x)^\top \theta_a)}{\sum_{a^\prime \in \cA}\exp(\phi(x)^\top  \theta_{a^\prime})}\,,
\end{align}
where parameters \(\theta\) are learned using either \textbf{Bound Optimization} \eqref{eq:objective_pac_bayes} or \textbf{Heuristic Optimization} \eqref{eq:learning_principle}. \textbf{Bound Optimization} involves a single hyperparameter, \(\lambda\), as defined in \cref{thm:main_result}. We set \(\lambda\) to its optimal value, \(\lambda_*\), which minimizes the bound with respect to \(\lambda\). Our theory does not support this approach since \cref{thm:main_result} requires \(\lambda\) to be fixed in advance, whereas $\lambda_*$ is data-dependent. However, we found this method to yield good empirical performance. On the other hand, \textbf{Heuristic Optimization} relies on three hyperparameters, \(\lambda_1\), \(\lambda_2\), and \(\lambda_3\), which we set to \(\lambda_1 = \lambda_2 = \lambda_3 = 10^{-5}\).

In \Cref{fig:main_exp_results,fig:main_exp_results2}, we present the rewards of the learned policies using different IW regularizations as a function of the quality of the logging policy \(\pi_0\), based on the two proposed learning principles: \textbf{Bound Optimization} in \Cref{fig:main_exp_results} and \textbf{Heuristic Optimization} in \Cref{fig:main_exp_results2}. In both figures, the first and second rows correspond to results on \texttt{MNIST} and \texttt{FashionMNIST}, respectively. In the first four columns, we plot the reward of the learned policy using a fixed IW regularization technique (\texttt{Clip}, \texttt{Har}, \texttt{IX}, or \texttt{ES} as defined in \eqref{eq:regs}) for various values of its hyperparameter within \([0,1]\). In the last column, we report the mean reward across these hyperparameter values to assess the sensitivity of the IW regularization technique to its hyperparameter. The \(x\)-axis represents \(\eta_0\), which controls the quality of the logging policy; higher values indicate better performance of the logging policy. We vary \(\eta_0 \in [-0.5, 0.5]\) to consider logging policies that perform worse than the uniform one (i.e., when \(\eta_0 < 0\)), to highlight settings that might require more IW regularization, although such scenarios may not be realistic.

Our results lead to the following conclusions. In \Cref{fig:main_exp_results}, we observe that all regularizations result in improved performance over the logging policy (i.e., all lines are above the dashed line representing the performance of the logging policy), with the \texttt{Har} regularization showing less improvement. Overall, \texttt{Clip}, \texttt{IX}, and \texttt{ES} achieve comparable performances, as summarized in the far-right column, despite regularizing IWs in very different ways. On the one hand, these results align with the generality of our bound, which applies to all these IW regularizations. On the other hand, they suggest that one can choose any IW regularization method when learning the policy by optimizing the theoretical bound without risking underperformance.

These results and conclusions are further confirmed by the rewards reported in \Cref{fig:main_exp_results2}, where the policies are learned through \textbf{Heuristic Optimization} \eqref{eq:learning_principle}. The performances are even better than those obtained when optimizing the theoretical bound. As discussed in \cref{subsec:lp}, this may be due to the practical optimization of the theoretical bound, where we used Monte Carlo to estimate the expectations, which performs poorly in high-dimensional problems. However, optimizing the heuristic or theoretical bound leads to similar performance when the IW regularizers are linear (\cref{app:linear_reg}) since, in that case, the Monte Carlo estimation was improved. Moreover, as summarized in the far-right column of \Cref{fig:main_exp_results2}, the average performances for all regularizations are comparable, except for \texttt{Har}, which is below the others, and \texttt{ES}, which performs slightly better. Notably, for all regularizations, there is at least one choice of regularization hyperparameter that achieves optimal performance. This finding diverges from \citet{aouali23a}, who attributed significant performance improvements to \texttt{ES} in a similar setting to ours. Our results clarify that these gains may be due to their newly introduced pessimistic learning principle rather than their smooth IW regularization (\texttt{ES}).

\section{Conclusion}\label{sec:conclusion}

In this paper, we present the first comprehensive analysis of estimators that rely on IW regularization techniques, an increasingly popular approach in OPE and OPL. Our results hold for a broad spectrum of IW regularization methods and are also applicable to the standard IPS without any IW regularization. From our theoretical findings, we derive two learning principles that apply across various IW regularizations. Our results suggest that despite the numerous proposed IW regularization techniques for OPE, conventional methods like clipping, still perform very well in OPL. 

Nevertheless, our work has three primary limitations. First, our bound includes empirical bias and variance terms, making it challenging to derive data-independent suboptimality gaps, as discussed in \cref{subsec:gbound}. Additionally, two-sided bounds for regularized IPS can be loose because they treat both tails similarly, whereas some studies indicate differences between the lower and upper tails of such estimators. Investigating methods to prove generic bounds by treating each side of the inequality individually, as in \citet{gabbianelli2023importance}, could address this issue and is left for future research. Second, optimizing our bound relies on the reparametrization trick and Monte Carlo estimation, which may have limitations in high-dimensional problems. Also, our reparametrization trick was only applied to simple linear-softmax policies defined in \eqref{eq:softmax_pac_bayes}. Therefore, exploring more advanced techniques for optimizing our theoretical bound presents an intriguing direction for future research. While this limitation can be mitigated by considering linear IW regularization techniques, as discussed in \cref{corr:lin_reg_main}, there is potential for better practical optimization of the bound for non-linear IW regularizations. Finally, extending our experiments to more complex policies and challenging settings, such as recommender systems with large action spaces, could further highlight the impact of IW regularization.

\bibliography{references}

\newpage
\appendix

\onecolumn
\title{Unified PAC-Bayesian Study of  Pessimism for Offline Policy Learning with Regularized Importance Sampling\\(Supplementary Material)}

\maketitle
The supplementary material is organized as follows. 
\begin{itemize}
    \item \textbf{\cref{sec:related_work}} includes an extended related work discussion.
    \item \textbf{\cref{proofs:opl}} outlines the proofs of our main results, as well as additional discussions.
    \item \textbf{\cref{sec:experiments_details}} presents our experimental setup for reproducibility, along with supplementary experiments.
\end{itemize}

\section{Extended Related Work}\label{sec:related_work}
The framework of contextual bandits is a widely adopted model for addressing online learning in uncertain environments \citep{lattimore19bandit,auer02finitetime,thompson33likelihood,russo18tutorial,li10contextual,chu11contextual}. This framework naturally aligns with the online learning paradigm, which seeks to adapt in real-time. However, in practical scenarios, challenges arise when dealing with a large action space. While numerous online algorithms have emerged to efficiently navigate the large action spaces in contextual bandits \citep{zong2016cascading,hong2022deep,zhu2022contextual,aoualimixed,aouali2024diffusion}, a notable need remains for offline methods that enable the optimization of decision-making based on historical data. Fortunately, we often possess large sample sets summarizing historical interactions with contextual bandit environments. Leveraging this, agents can enhance their policies offline \citep{swaminathan2015batch,london2019bayesian,sakhi2022pac,aouali23a}. This study is primarily dedicated to exploring this offline mode of contextual bandits, often referred to as the \textit{off-policy} formulation \citep{dudik2011doubly,dudik2012sample,dudik14doubly,wang2017optimal,farajtabar2018more}. Off-policy contextual bandits entail two primary tasks. The first task, known as \textit{off-policy evaluation (OPE)}, revolves around estimating policy performance using historical data. This estimation replicates how evaluations would unfold as if the policy is engaging with the environment in real-time. Subsequently, the derived estimator is optimized to find the optimal policy, and this is called \textit{off-policy learning (OPL)} \citep{swaminathan2015batch}. Next, we review both OPE and OPL.

\subsection{Off-Policy Evaluation}
\label{sec:related_work_OPE}

In recent years, OPE has experienced a noticeable surge of interest, with numerous significant contributions \citep{dudik2011doubly,dudik2012sample,dudik14doubly,wang2017optimal,farajtabar2018more,su2019cab,su2020doubly,kallus2021optimal,metelli2021subgaussian,kuzborskij2021confident,saito2022off,sakhi2020blob,jeunen2021pessimistic,saito2023off}. The literature on OPE can be broadly classified into three primary approaches. The first, referred to as the direct method (DM) \citep{jeunen2021pessimistic,aouali2024bayesian}, involves the development of a model designed to approximate expected costs for any context-action pair. This model is subsequently employed to estimate the performance of the policies. This approach is often used in large-scale recommender systems \citep{sakhi2020blob,jeunen2021pessimistic,aouali2022probabilistic,aouali2022reward}. The second approach, known as inverse propensity scoring (IPS) \citep{horvitz1952generalization,dudik2012sample}, aims to estimate the costs associated with the evaluated policies by correcting for the inherent preference bias of the logging policy within the sample dataset. While IPS maintains its unbiased nature when operating under the assumption that the evaluation policy is absolutely continuous concerning the logging policy, it can be susceptible to high variance and substantial bias when this assumption is violated \citep{sachdeva2020off}. In response to the variance issue, various techniques have been introduced, including the clipping of importance weights \citep{ionides2008truncated,swaminathan2015batch}, their smoothing \citep{aouali23a}, and self-normalization \citep{swaminathan2015self}, among others \citep{gilotte2018offline}. The third approach, known as doubly robust (DR) \citep{robins1995semiparametric,bang2005doubly,dudik2011doubly,dudik14doubly,farajtabar2018more}, combines elements from both the direct method (DM) and inverse propensity scoring (IPS). This amalgamation serves to reduce variance in the estimation process. Typically, the accuracy of an OPE estimator $\hat{R}(\pi, S)$, is assessed using the mean squared error (MSE). It's worth mentioning that \citet{metelli2021subgaussian} advocate for the preference of high-probability concentration rates as the favored metric for evaluating OPE estimators. This work focuses primarily on OPL and hence we did not evaluate the regularized IPS on OPE.

\subsection{Off-Policy Learning}
\label{sec:opl_learning_principles}
Prior OPL research has primarily focused on the derivation of learning principles rooted in generalization bounds \emph{under the clipped IPS} estimator. First, \citet{swaminathan2015batch} designed a learning principle that favors policies that simultaneously demonstrate low estimated cost and empirical variance. Furthermore, \citet{faury2020distributionally} extended this concept by incorporating distributional robustness optimization, while \citet{zenati2020counterfactual} adapted it to continuous action spaces. The latter also proposed a softer importance weight regularization that is different from clipping. Additionally, \citet{london2019bayesian} has elegantly established a connection between PAC-Bayes theory and OPL. This connection led to the derivation of a novel PAC-Bayes generalization bound for the clipped IPS. Once again, this bound served as the foundation for the creation of a novel learning principle that promotes policies with low estimated cost and parameters that are in close proximity to those of the logging policy in terms of $L_2$ distance. Additionally, \citet{sakhi2022pac} introduced new generalization bounds tailored to the clipped IPS. A notable feature of their approach is the direct optimization of the theoretical bound, rendering the use of learning principles unnecessary. Expanding upon these advancements, \citet{aouali23a} presented a generalized bound designed for IPS with exponential smoothing. What distinguishes this particular bound from previous ones is its applicability to standard IPS without the prerequisite assumption that importance weights are bounded. They also demonstrated that optimizing this bound for IPS with exponential smoothing results in superior performance compared to optimizing existing bounds for clipped IPS. However, a significant question lingers: is the performance improvement primarily attributed to the enhanced regularization offered by exponential smoothing over clipping, or is it the consequence of the novel bound itself? This uncertainty arises from the fact that the bounds employed for clipping and exponential smoothing differ. In light of these considerations, our work introduces a unified set of generalization bounds, allowing for meaningful comparisons that address this question and contribute to a deeper understanding of the performance dynamics.

\section{MISSING PROOFS AND RESULTS}\label{proofs:opl}
In this section, we prove \cref{thm:main_result}.

\begin{theorem}[\cref{thm:main_result} Restated]
 Let $\lambda>0$,  $n \ge 1$, $\delta \in (0, 1)$, and let $\mathbb{P}$ be a fixed prior on $\Theta$. Then
the following inequality holds with probability at least $1-\delta$ for any distribution $\mathbb{Q}$ on $\Theta$
\begin{align}
    \left|\E{\theta \sim \mathbb{Q}}{R(\pi_{\theta})-\hat{R}(\pi_{\theta}, S)}\right|   \le \sqrt{ \frac{{\textsc{kl}}_{1}(\mathbb{Q})}{2n} } + B_n(\mathbb{Q})  +
\frac{{\textsc{kl}}_{2}(\mathbb{Q})}{n \lambda } + \frac{\lambda}{2}\bar{V}_n(\mathbb{Q})\,,
\end{align}
where ${\textsc{kl}}_{1}(\mathbb{Q})=D_{\mathrm{KL}}(\mathbb{Q} \| \mathbb{P})+\log \frac{4\sqrt{n}}{\delta}$, ${\textsc{kl}}_{2}(\mathbb{Q})=D_{\mathrm{KL}}(\mathbb{Q} \| \mathbb{P})+\log (4 / \delta)$, and 
\begin{align*}
    &\bar{V}_n(\mathbb{Q}) = \frac{1}{n}\sum_{i=1}^n  \mathbb{E}_{\theta \sim \mathbb{Q}}[\E{a \sim \pi_0(\cdot | x_i)}{\hat{w}_\theta
    (x_i, a)^2}
    +\hat{w}_\theta
    (x_i, a_i)^2 c_i^2]\\
    &B_n(\mathbb{Q}) = \frac{1}{n} \sum_{i=1}^{n} \sum_{a \in \cA} \mathbb{E}_{\theta \sim \mathbb{Q}}[|\pi_{\theta}(a | x_i) -\pi_0(a | x_i)\hat{w}_\theta
(x_i, a)|]
\end{align*}
\end{theorem}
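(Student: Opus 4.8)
The plan is to bound $\bigl|\mathbb{E}_{\theta\sim\mathbb{Q}}[R(\pi_\theta)-\hat R(\pi_\theta,S)]\bigr|$ by inserting two intermediate conditional-risk quantities and decomposing $\mathbb{E}_{\theta\sim\mathbb{Q}}[R(\pi_\theta)-\hat R(\pi_\theta,S)]=I_1+I_2+I_3$, then controlling each $|I_k|$ and combining by the triangle inequality. Writing $R(\pi_\theta\mid x)=\mathbb{E}_{a\sim\pi_\theta(\cdot|x)}[c(x,a)]$ for the conditional risk and $R_{\mathrm{reg}}(\pi_\theta\mid x)=\mathbb{E}_{a\sim\pi_0(\cdot|x)}[\hat w_\theta(x,a)c(x,a)]$ for its importance-reweighted surrogate, I set $I_1=\mathbb{E}_{\theta\sim\mathbb{Q}}[R(\pi_\theta)-\tfrac1n\sum_i R(\pi_\theta\mid x_i)]$ (pure context estimation error), $I_2=\tfrac1n\sum_i\mathbb{E}_{\theta\sim\mathbb{Q}}[R(\pi_\theta\mid x_i)-R_{\mathrm{reg}}(\pi_\theta\mid x_i)]$ (the regularization bias, deterministic given the contexts), and $I_3=\tfrac1n\sum_i\mathbb{E}_{\theta\sim\mathbb{Q}}[R_{\mathrm{reg}}(\pi_\theta\mid x_i)]-\mathbb{E}_{\theta\sim\mathbb{Q}}[\hat R(\pi_\theta,S)]$ (the action-sampling error given the contexts). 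Conditioning on the observed contexts throughout is the device that sidesteps the intractable $\mathbb{E}_{x\sim\nu}[\cdot]$, which only $I_1$ must confront.

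The terms $I_1$ and $I_2$ are the routine pieces. Since $R(\pi_\theta\mid x_i)\in[-1,0]$ and the contexts are i.i.d., $\tfrac1n\sum_i R(\pi_\theta\mid x_i)$ is an empirical mean of a bounded loss, so a Hoeffding-type PAC-Bayes inequality — I would cite \citet[Theorem~3.3]{alquier2021user} — gives, with probability at least $1-\delta/2$ and simultaneously over all $\mathbb{Q}$, the two-sided bound $|I_1|\le\sqrt{{\textsc{kl}}_1(\mathbb{Q})/(2n)}$, where the $\delta/2$ budget and the McAllester $\sqrt n$ factor combine into ${\textsc{kl}}_1$. For $I_2$, I would expand both conditional risks over $a\in\cA$, so that $R(\pi_\theta\mid x_i)-R_{\mathrm{reg}}(\pi_\theta\mid x_i)=\sum_{a\in\cA}[\pi_\theta(a|x_i)-\pi_0(a|x_i)\hat w_\theta(x_i,a)]c(x_i,a)$, then apply $|c(x,a)|\le 1$, the triangle inequality, and Jensen to move the absolute value inside $\mathbb{E}_{\theta\sim\mathbb{Q}}$; this is a deterministic (given-context) estimate yielding exactly $|I_2|\le B_n(\mathbb{Q})$ at no probability cost.

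The term $I_3$ is where the real work lies and is the step I expect to be the main obstacle. The plan is to fix $\theta$ and set $f_i(a_i,\pi_\theta)=R_{\mathrm{reg}}(\pi_\theta\mid x_i)-\hat w_\theta(x_i,a_i)c_i$; with respect to the filtration $(\mathcal{F}_i)$ generated by $a_1,\dots,a_i$, the identity $\mathbb{E}[\hat w_\theta(x_i,a_i)c_i\mid\mathcal{F}_{i-1}]=\mathbb{E}_{a\sim\pi_0(\cdot|x_i)}[\hat w_\theta(x_i,a)c(x_i,a)]=R_{\mathrm{reg}}(\pi_\theta\mid x_i)$ shows $(f_i)_i$ is a martingale difference sequence with $\mathbb{E}_{\theta\sim\mathbb{Q}}[\sum_i f_i]=nI_3$. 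I would then invoke the unbounded-loss PAC-Bayes martingale inequality of \citet[Theorem~2.1]{haddouche2022pac}, which controls $\mathbb{E}_{\theta\sim\mathbb{Q}}[\sum_i f_i]$ by ${\textsc{kl}}_2(\mathbb{Q})/\lambda$ plus $\tfrac{\lambda}{2}$ times the expected observed quadratic variation $\mathbb{E}_{\theta\sim\mathbb{Q}}[\sum_i f_i^2]$. The crux is then to show $\sum_i f_i^2$ is dominated by $n\bar V_n(\mathbb{Q})$ after taking $\mathbb{E}_{\theta\sim\mathbb{Q}}$: because $c\in[-1,0]$ and $\hat w_\theta\ge 0$, both $R_{\mathrm{reg}}(\pi_\theta\mid x_i)$ and $\hat w_\theta(x_i,a_i)c_i$ are non-positive, so their product is non-negative and the cross term drops, giving $f_i^2\le R_{\mathrm{reg}}(\pi_\theta\mid x_i)^2+\hat w_\theta(x_i,a_i)^2 c_i^2$; applying Jensen to $R_{\mathrm{reg}}^2$ together with $c^2\le 1$ bounds the first term by $\mathbb{E}_{a\sim\pi_0(\cdot|x_i)}[\hat w_\theta(x_i,a)^2]$, recovering precisely the two summands of $\bar V_n$. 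Applying the martingale bound to both $\sum_i f_i$ and $\sum_i(-f_i)$, each with budget $\delta/4$ (both share the same quadratic variation), yields the two-sided $|I_3|\le{\textsc{kl}}_2(\mathbb{Q})/(n\lambda)+\tfrac{\lambda}{2}\bar V_n(\mathbb{Q})$, the $\log(4/\delta)$ in ${\textsc{kl}}_2$ coming from the $\delta/4$ per side.

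Finally, I would combine the three estimates by the triangle inequality on the intersection of the event for $I_1$ (probability $\ge 1-\delta/2$) and the events for $I_3$ (jointly probability $\ge 1-\delta/2$), which by a union bound holds with probability at least $1-\delta$, delivering the stated inequality. The genuine difficulty is entirely in the $I_3$ step: matching the exact quadratic-variation proxy required by \citet[Theorem~2.1]{haddouche2022pac} to the observed $\sum_i f_i^2$ (rather than a predictable conditional variance), and verifying the sign argument that makes the cross term vanish so that $\bar V_n$ — and not a looser constant multiple of it — emerges.
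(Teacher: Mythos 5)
Your proposal follows essentially the same route as the paper's proof: the identical three-term decomposition $I_1+I_2+I_3$ obtained by conditioning on the contexts, the bound $|I_1|\le\sqrt{{\textsc{kl}}_1(\mathbb{Q})/(2n)}$ from \citet[Theorem~3.3]{alquier2021user} at level $\delta/2$, the deterministic bound $|I_2|\le B_n(\mathbb{Q})$ via $|c|\le 1$, and the same martingale-difference construction handled by \citet[Theorem~2.1]{haddouche2022pac} for $I_3$, with matching confidence-budget accounting.

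The one point that needs correcting is exactly the one you flagged as uncertain. The quadratic-variation proxy in \citet[Theorem~2.1]{haddouche2022pac} is not the observed variation $[M]_n(\theta)=\sum_i f_i^2$ alone, but the sum $\langle M\rangle_n(\theta)+[M]_n(\theta)$ of the predictable and observed variations. If you bound the two pieces separately as your plan suggests --- $f_i^2$ by the sign argument plus Jensen, and $\mathbb{E}[f_i^2\mid\mathcal{F}_{i-1}]=\mathrm{Var}_{a\sim\pi_0(\cdot|x_i)}\big(\hat w_\theta(x_i,a)c(x_i,a)\big)\le\mathbb{E}_{a\sim\pi_0(\cdot|x_i)}[\hat w_\theta(x_i,a)^2]$ --- you obtain $\langle M\rangle_n+[M]_n\le\sum_i\big(2\,\mathbb{E}_{a\sim\pi_0(\cdot|x_i)}[\hat w_\theta(x_i,a)^2]+\hat w_\theta(x_i,a_i)^2c_i^2\big)$, i.e., a factor $2$ on the first summand of $\bar V_n(\mathbb{Q})$, not the stated constant. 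The paper avoids this by expanding the sum jointly: writing $X=\hat w_\theta(x_i,a_i)c_i$, the conditional variance contributes $\mathbb{E}[X^2]-(\mathbb{E}[X])^2$, whose $-(\mathbb{E}[X])^2$ cancels the $+(\mathbb{E}[X])^2$ appearing in $f_i^2$, leaving $\mathbb{E}[f_i^2\mid\mathcal{F}_{i-1}]+f_i^2=\mathbb{E}[X^2]+X^2-2X\,\mathbb{E}[X]$; the cross term is then discarded by precisely your sign argument ($c\le 0$ and $\hat w_\theta\ge 0$ make $X\,\mathbb{E}[X]\ge 0$), and $c^2\le 1$ gives the $\bar V_n(\mathbb{Q})$ summand exactly. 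So your two key observations are the right ones, but they must be applied to the joint sum (where Jensen is not even needed) to recover $\frac{\lambda}{2}\bar V_n(\mathbb{Q})$ without a factor-$2$ degradation.
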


\begin{proof}
First, we decompose the difference $\E{\theta \sim \mathbb{Q}}{R(\pi_{\theta})-\hat{R}(\pi_{\theta}, S)}$ as
\begin{align*}
    \E{\theta \sim \mathbb{Q}}{R(\pi_{\theta})-\hat{R}(\pi_{\theta}, S)} =  \underbrace{\E{\theta \sim \mathbb{Q}}{R(\pi_{\theta}) - \frac{1}{n}\sum_{i=1}^n R(\pi_{\theta} | x_i)}}_{I_1} + \underbrace{\frac{1}{n} \sum_{i=1}^n \E{\theta \sim \mathbb{Q}}{R(\pi_{\theta} | x_i) - \frac{1}{n}\sum_{i=1}^n \hat{R}(\pi_{\theta} | x_i)}}_{I_2}\\ + \underbrace{\frac{1}{n}\sum_{i=1}^n \E{\theta \sim \mathbb{Q}}{\hat{R}(\pi_{\theta} | x_i)} - \E{\theta \sim \mathbb{Q}}{\hat{R}(\pi_{\theta}, S)}}_{I_3}\,,
\end{align*}
where 
\begin{align*}
    & R(\pi_{\theta}) = \E{x \sim \nu\,, a \sim \pi_{\theta}(\cdot | x)}{c(x, a)}\,,\\
    & R(\pi_{\theta} | x_i)    = \E{a \sim \pi_{\theta}(\cdot | x_i)}{c(x_i, a)}\,,\\ 
   & \hat{R}(\pi_{\theta} | x_i) = \E{a \sim \pi_0(\cdot | x_i)}{\hat{w}_\theta(x_i, a)c(x_i, a)}\,, \\
   & \hat{R}(\pi_{\theta}, S) = \frac{1}{n} \sum_{i=1}^n \hat{w}_\theta(x_i, a_i)c_i \,,
\end{align*}
where $\hat{w}_\theta(x, a) = g(\pi_\theta(a|x), \pi_0(a|x))$ for some non-negative function $g$. Our goal is to bound $|\E{\theta \sim \mathbb{Q}}{R(\pi_{\theta})-\hat{R}(\pi_{\theta}, S)} |$ and thus we need to bound $|I_1| + |I_2| + |I_3| $. We start with $|I_1|$, \citet[Theorem 3.3]{alquier2021user} yields that following inequality holds with probability at least $1 -\delta/2$ for any distribution $\mathbb{Q}$ on $\Theta$ 
\begin{align}\label{eq:app_i1}
    |I_1| &\leq \sqrt{ \frac{D_{\mathrm{KL}}(\mathbb{Q} \| \mathbb{P})+\log \frac{4\sqrt{n}}{\delta}}{2n}}\,,\nonumber\\
    &= \sqrt{ \frac{{\textsc{kl}}_{1}(\mathbb{Q})}{2n} }\,.
\end{align}
Moreover, $|I_2|$ can be bounded by decomposing it as 
\begin{align*}
    |I_2|&=\left|\E{\theta \sim \mathbb{Q}}{\frac{1}{n} \sum_{i=1}^{n}  \E{a \sim \pi_{\theta}(\cdot | x_i)}{c(x_i, a)}-\frac{1}{n} \sum_{i=1}^{n}\mathbb{E}_{a \sim \pi_0\left(\cdot | x_i\right)}\left[ \hat{w}_\theta(x_i, a)c(x_i, a) \right]}\right|\,, \\
    &=\left|\frac{1}{n} \sum_{i=1}^{n} \sum_{a \in \cA} \E{\theta \sim \mathbb{Q}}{\pi_{\theta}(a | x_i) c(x_i, a) -\pi_0(a | x_i)\hat{w}_\theta(x_i, a)c(x_i, a)}\right| \,,\\
    &\leq \frac{1}{n} \sum_{i=1}^{n} \sum_{a \in \cA}\E{\theta \sim \mathbb{Q}}{ \left|\pi_{\theta}(a | x_i) -\pi_0(a | x_i)\hat{w}_\theta(x_i, a)\right||c(x_i, a)|}\,.
\end{align*}
But $|c(x, a)| \leq 1$ for any $a \in \cA$ and $x \in \cX$. Thus 
\begin{align}\label{eq:app_i2}
   |I_2| &\leq \frac{1}{n} \sum_{i=1}^{n} \sum_{a \in \cA}\E{\theta \sim \mathbb{Q}}{ \left|\pi_{\theta}(a | x_i) -\pi_0(a | x_i)\hat{w}_\theta(x_i, a)\right|}\,,\nonumber\\
   & = B_n(\mathbb{Q})
\end{align}
Finally, we need to bound the main term $|I_3|$. To achieve this, we borrow and adapt the statement of the following technical lemma \citep[Theorem 2.1]{haddouche2022pac} to our setting.

\begin{lemma}\label{lemma:app_maxime}
Let $\mathcal{Z}$ be an instance space and let $S_n=\left(z_i\right)_{i \in [n]}$ be an $n$-sized dataset for some $n \geq 1$. Let $\left(\mathcal{F}_i\right)_{i \in\{0\} \cup [n] }$ be a filtration adapted to $S_n$. Also, let $\Theta$ be a parameter space and $\pi_\theta$ for $\theta \in \Theta$ are the corresponding policies. Now assume that $(f_i\left(S_i, \pi_\theta\right))_{i \in [n]}$ is a martingale difference sequence for any $\theta \in \Theta$, that is for any $i \in [n]$, and $\theta \in \Theta\,,$ we have that $ \mathbb{E}\left[f_i\left(S_i, \pi_\theta\right) | \mathcal{F}_{i-1}\right]=0$. Moreover, for any $\theta \in \Theta$, let $M_n(\theta)=\sum_{i=1}^n f_i\left(S_i, \pi_\theta\right)$. Then for any fixed prior, $\mathbb{P}$, on $\Theta$, any $\lambda>0$, the following holds with probability $1-\delta$ over the sample $S_n$, simultaneously for any $\mathbb{Q}$, on $\Theta$
\begin{align*}
    \left|\E{\theta \sim \mathbb{Q}}{M_n(\theta)}\right| \leq \frac{D_{\mathrm{KL}}(\mathbb{Q} \| \mathbb{P})+\log (2 / \delta)}{\lambda}+\frac{\lambda}{2}\left(\E{\theta \sim \mathbb{Q}}{\langle M\rangle_n(\theta) + [M]_n(\theta)}\right)\,,
\end{align*}
where $ \langle M\rangle_n(\theta)=\sum_{i=1}^n \mathbb{E}\left[f_i\left(S_i, \pi_\theta\right)^2 | \mathcal{F}_{i-1}\right]$ and $[M]_n(\theta)=\sum_{i=1}^n f_i\left(S_i, \pi_\theta\right)^2$.
\end{lemma}

Recall that $\hat{w}_\theta(x, a) = g(\pi_\theta(a|x), \pi_0(a|x))$ for some non-negative function $g$. To apply \cref{lemma:app_maxime}, we need to construct an adequate martingale difference sequence $(f_i(S_i, \pi_\theta))_{i \in [n]}$ for $\theta \in \Theta$ that allows us to retrieve $|I_3|$. To achieve this, we define $S_n = (a_i)_{i \in [n]}$ as the set of $n$ taken actions. Also, we let $(\mathcal{F}_i)_{i \in \{0\}\cup[n]}$ be a filtration adapted to $S_n$. For $\theta \in \Theta$, we define $f_i\left(S_i, \pi_\theta\right)$ as 
\begin{align*}
    f_i\left(S_i, \pi_\theta\right) = f_i\left(a_i, \pi_\theta \right) = \E{a \sim \pi_0(\cdot | x_i)}{g(\pi_\theta(a | x_i), \pi_0(a | x_i))c(x_i, a)} - g(\pi_\theta(a_i | x_i), \pi_0(a_i | x_i))c(x_i, a_i)\,.
\end{align*}
We stress that $f_i(S_i, \pi_\theta)$ only depends on the last action in $S_i$, $a_i$, and the policy $\pi_\theta$. For this reason, we denote it by $f_i(a_i, \pi_\theta)$. The function $f_i$ is indexed by $i$ since it depends on the fixed $i$-th context, $x_i$. The context $x_i$ is fixed and thus randomness only comes from $a_i \sim \pi_0(\cdot | x_i)$. It follows that the expectations are under $a_i \sim \pi_0(\cdot | x_i)$. First, we have that $\mathbb{E}\left[f_i\left(a_i, \pi_\theta\right) | \mathcal{F}_{i-1}\right]= 0$ for any $i \in [n]\,,$ and $ \theta \in \Theta$. This follows from 
\begin{align*}
    &\mathbb{E}\left[f_i\left(a_i, \pi_\theta\right) | \mathcal{F}_{i-1}\right]  = \mathbb{E}_{a_i \sim \pi_0(\cdot | x_i)}\left[f_i\left(a_i, \pi_\theta\right) \Big| a_1, \ldots, a_{i-1}\right] \,,\\
     &=  \mathbb{E}_{a_i \sim \pi_0(\cdot | x_i)}\left[\E{a \sim \pi_0(\cdot | x_i)}{g(\pi_\theta(a | x_i), \pi_0(a | x_i))c(x_i, a)} - g(\pi_\theta(a_i | x_i), \pi_0(a_i | x_i))c(x_i, a_i) \Big| a_1, \ldots, a_{i-1}\right]\,,\\
     &\stackrel{(i)}{=}  \E{a \sim \pi_0(\cdot | x_i)}{g(\pi_\theta(a | x_i), \pi_0(a | x_i))c(x_i, a)} - \mathbb{E}_{a_i \sim \pi_0(\cdot | x_i)}\left[g(\pi_\theta(a_i | x_i), \pi_0(a_i | x_i))c(x_i, a_i) \Big| a_1, \ldots, a_{i-1}\right]\,.
\end{align*}
In $(i)$ we use the fact that given $x_i$, $ \E{a \sim \pi_0(\cdot | x_i)}{g(\pi_\theta(a | x_i), \pi_0(a | x_i))c(x_i, a)}$ is deterministic. Now $a_i$ does not depend on $a_1, \ldots, a_{i-1}$ since logged data is i.d.d. Hence 
\begin{align*}
    \mathbb{E}_{a_i \sim \pi_0(\cdot | x_i)}\left[g(\pi_\theta(a_i | x_i), \pi_0(a_i | x_i))c(x_i, a_i) \Big| a_1, \ldots, a_{i-1}\right] &= \mathbb{E}_{a_i \sim \pi_0(\cdot | x_i)}\left[g(\pi_\theta(a_i | x_i), \pi_0(a_i | x_i))c(x_i, a_i)\right]\,,\\ &= \mathbb{E}_{a \sim \pi_0(\cdot | x_i)}\left[g(\pi_\theta(a | x_i), \pi_0(a | x_i))c(x_i, a)\right]\,.
\end{align*}
It follows that 
\begin{align*}
     \mathbb{E}[f_i&\left(a_i, \pi_\theta\right) | \mathcal{F}_{i-1}] 
     \\
     &=  \E{a \sim \pi_0(\cdot | x_i)}{g(\pi_\theta(a | x_i), \pi_0(a | x_i))c(x_i, a)} - \mathbb{E}_{a_i \sim \pi_0(\cdot | x_i)}\left[g(\pi_\theta(a_i | x_i), \pi_0(a_i | x_i))c(x_i, a_i) \Big| a_1, \ldots, a_{i-1}\right]\,,\\
    &= \E{a \sim \pi_0(\cdot | x_i)}{g(\pi_\theta(a | x_i), \pi_0(a | x_i))c(x_i, a)} - \mathbb{E}_{a \sim \pi_0(\cdot | x_i)}\left[g(\pi_\theta(a | x_i), \pi_0(a | x_i))c(x_i, a)\right]\,,\\
    &= 0\,.
\end{align*}
Therefore, for any $\theta \in \Theta$, $(f_i(a_i, \pi_\theta))_{i \in [n]}$ is a martingale difference sequence. Hence we apply \cref{lemma:app_maxime} and obtain that the following inequality holds with probability at least $1-\delta/2$ for any $\mathbb{Q}$ on $\Theta$
\begin{align}\label{eq:app_proof_0}
    \left|\E{\theta \sim \mathbb{Q}}{M_n(\theta)}\right| &\leq \frac{D_{\mathrm{KL}}(\mathbb{Q} \| \mathbb{P})+\log (4 / \delta)}{\lambda}+\frac{\lambda}{2}\left(\E{\theta \sim \mathbb{Q}}{\langle M\rangle_n(\theta) + [M]_n(\theta)}\right)\,,\nonumber\\
    &=\frac{{\textsc{kl}}_{2}(\mathbb{Q})}{ \lambda } 
+ \frac{\lambda}{2}\left(\E{\theta \sim \mathbb{Q}}{\langle M\rangle_n(\theta) + [M]_n(\theta)}\right)\,,
\end{align}
where 
\begin{align*}
    M_n(\theta)&=\sum_{i=1}^n f_i\left(a_i, \pi_\theta\right)\,,\\
     \langle M\rangle_n(\theta)&=\sum_{i=1}^n \mathbb{E}\left[f_i\left(a_i, \pi_\theta\right)^2 | \mathcal{F}_{i-1}\right]\,,\\
     [M]_n(\theta)&=\sum_{i=1}^n f_i\left(a_i, \pi_\theta\right)^2\,.
\end{align*}
Now these terms can be decomposed as 
\begin{align}\label{eq:app_proof_1}
    \E{\theta \sim \mathbb{Q}}{M_n(\theta)} &= \sum_{i=1}^n\E{\theta \sim \mathbb{Q}}{f_i\left(a_i, \pi_\theta\right)}\,,\nonumber\\
    &=  \sum_{i=1}^n \E{\theta \sim \mathbb{Q}}{\E{a \sim \pi_0(\cdot | x_i)}{g(\pi_\theta(a | x_i), \pi_0(a | x_i))c(x_i, a)} - g(\pi_\theta(a_i | x_i), \pi_0(a_i | x_i))c(x_i, a_i)}\,,\nonumber\\
    & \stackrel{(i)}{=} \sum_{i=1}^n \E{\theta \sim \mathbb{Q}}{\hat{R}(\pi_{\theta} | x_i)} - n\E{\theta \sim \mathbb{Q}}{\hat{R}(\pi_{\theta}, S)}\,,\nonumber\\
    &= nI_3\,,
\end{align}
where we used the fact that $c_i = c(a_i, x_i)$ for any $i \in [n]$ in $(i)$. 

Now we focus on the terms $\langle M\rangle_n(\theta)$ and $ [M]_n(\theta)$. First, we have that 
\begin{align}\label{eq:app_proof_2}
    f_i\left(a_i, \pi_\theta\right)^2 &= \Big(\E{a \sim \pi_0(\cdot | x_i)}{g(\pi_\theta(a | x_i), \pi_0(a | x_i))c(x_i, a)} - g(\pi_\theta(a_i | x_i), \pi_0(a_i | x_i))c(x_i, a_i)\Big)^2\,,\\
    &= \E{a \sim \pi_0(\cdot | x_i)}{g(\pi_\theta(a | x_i), \pi_0(a | x_i))c(x_i, a)}^2 + \Big(g(\pi_\theta(a_i | x_i), \pi_0(a_i | x_i))c(x_i, a_i)\Big)^2 \nonumber \\ & \hspace{2cm} - 2   \E{a \sim \pi_0(\cdot | x_i)}{g(\pi_\theta(a | x_i), \pi_0(a | x_i))c(x_i, a)} g(\pi_\theta(a_i | x_i), \pi_0(a_i | x_i))c(x_i, a_i)\,,\nonumber\\
    &= \E{a \sim \pi_0(\cdot | x_i)}{g(\pi_\theta(a | x_i), \pi_0(a | x_i))c(x_i, a)}^2 + g(\pi_\theta(a_i | x_i), \pi_0(a_i | x_i))^2c(x_i, a_i)^2 \nonumber \\ & \hspace{2cm} - 2   \E{a \sim \pi_0(\cdot | x_i)}{g(\pi_\theta(a | x_i), \pi_0(a | x_i))c(x_i, a)} g(\pi_\theta(a_i | x_i), \pi_0(a_i | x_i))c(x_i, a_i)\,.\nonumber
\end{align}
Moreover, $f_i\left(a_i, \pi_\theta\right)^2$ does not depend on $a_1, \ldots, a_{i-1}$. Thus
\begin{align*}
    \mathbb{E}\left[f_i\left(a_i, \pi_\theta\right)^2 | \mathcal{F}_{i-1}\right] = \mathbb{E}_{a_i \sim \pi_0(\cdot | x_i)}\left[f_i\left(a_i, \pi_\theta\right)^2 | \mathcal{F}_{i-1}\right] = \mathbb{E}_{a_i \sim \pi_0(\cdot | x_i)}\left[f_i\left(a_i, \pi_\theta\right)^2\right]= \mathbb{E}_{a \sim \pi_0(\cdot | x_i)}\left[f_i\left(a, h\right)^2\right]\,.
\end{align*}
Computing $\mathbb{E}_{a \sim \pi_0(\cdot | x_i)}\left[f_i\left(a, h\right)^2\right]$ using the decomposition in \eqref{eq:app_proof_2} yields
\begin{align}\label{eq:app_proof_3}
    \mathbb{E}[f_i&\left(a_i, \pi_\theta\right)^2 | \mathcal{F}_{i-1}] = \mathbb{E}_{a \sim \pi_0(\cdot | x_i)}\left[f_i\left(a, h\right)^2\right] \,,\nonumber\\
    &= -  \E{a \sim \pi_0(\cdot | x_i)}{g(\pi_\theta(a | x_i), \pi_0(a | x_i))c(x_i, a)}^2
+ \E{a \sim \pi_0(\cdot | x_i)}{g(\pi_\theta(a | x_i), \pi_0(a | x_i))^2c(x_i, a)^2}
\end{align}
Combining \eqref{eq:app_proof_2} and \eqref{eq:app_proof_3} leads to
\begin{align}\label{eq:app_proof_4}
    \mathbb{E}[f_i&\left(a_i, \pi_\theta\right)^2 | \mathcal{F}_{i-1}] + f_i\left(a_i, \pi_\theta\right)^2 = \E{a \sim \pi_0(\cdot | x_i)}{g(\pi_\theta(a | x_i), \pi_0(a | x_i))^2c(x_i, a)^2}+ g(\pi_\theta(a_i | x_i), \pi_0(a_i | x_i))^2c(x_i, a_i)^2 \nonumber\\ &\hspace{2cm} - 2   \E{a \sim \pi_0(\cdot | x_i)}{g(\pi_\theta(a | x_i), \pi_0(a | x_i))c(x_i, a)} g(\pi_\theta(a_i | x_i), \pi_0(a_i | x_i))c(x_i, a_i)\,,\nonumber\\
    & \stackrel{(i)}{\leq} \E{a \sim \pi_0(\cdot | x_i)}{g(\pi_\theta(a | x_i), \pi_0(a | x_i))^2c(x_i, a)^2}+ g(\pi_\theta(a_i | x_i), \pi_0(a_i | x_i))^2c(x_i, a_i)^2\,.
\end{align}
The inequality in $(i)$ holds because $- 2   \E{a \sim \pi_0(\cdot | x_i)}{g(\pi_\theta(a | x_i), \pi_0(a | x_i))c(x_i, a)} g(\pi_\theta(a_i | x_i), \pi_0(a_i | x_i))c(x_i, a_i) \leq 0$ since $g$ is non-negative. Therefore, we have that
\begin{align*}
    \langle M\rangle_n(\theta) + [M]_n(\theta) \leq \sum_{i=1}^n  \E{a \sim \pi_0(\cdot | x_i)}{g(\pi_\theta(a | x_i), \pi_0(a | x_i))^2c(x_i, a)^2}+ g(\pi_\theta(a_i | x_i), \pi_0(a_i | x_i))^2c(x_i, a_i)^2\,.
\end{align*}
It follows that 
\begin{align}\label{eq:app_proof_5}
    &\E{\theta \sim \mathbb{Q}}{\langle M\rangle_n(\theta) + [M]_n(\theta)}\nonumber\\ & \leq \sum_{i=1}^n  \E{\theta \sim \mathbb{Q}}{\E{a \sim \pi_0(\cdot | x_i)}{g(\pi_\theta(a | x_i), \pi_0(a | x_i))^2c(x_i, a)^2}}+ \E{\theta \sim \mathbb{Q}}{g(\pi_\theta(a_i | x_i), \pi_0(a_i | x_i))^2c(x_i, a_i)^2}\,.
\end{align}
Combining \eqref{eq:app_proof_0} and \eqref{eq:app_proof_5} yields
\begin{align}\label{eq:app_proof_6}
  n |I_3| &= | \sum_{i=1}^n \E{\theta \sim \mathbb{Q}}{\hat{R}(\pi_{\theta} | x_i)} - n\E{\theta \sim \mathbb{Q}}{\hat{R}(\pi_{\theta}, S)}  |\, \nonumber\\
   &\leq \frac{{\textsc{kl}}_{2}(\mathbb{Q})}{ \lambda } + \frac{\lambda}{2}\sum_{i=1}^n  \E{\theta \sim \mathbb{Q}}{\E{a \sim \pi_0(\cdot | x_i)}{g(\pi_\theta(a | x_i), \pi_0(a | x_i))^2c(x_i, a)^2}} \nonumber\\& \hspace{7cm} + \E{\theta \sim \mathbb{Q}}{g(\pi_\theta(a_i | x_i), \pi_0(a_i | x_i))^2c(x_i, a_i)^2}\,.
\end{align}
This means that the following inequality holds with probability at least $1-\delta/2$ for any distribution $\mathbb{Q}$ on $\Theta$
\begin{align}
   \left|I_3  \right| &\leq \frac{{\textsc{kl}}_{2}(\mathbb{Q})}{n \lambda } + \frac{\lambda}{2n}\sum_{i=1}^n  \E{\theta \sim \mathbb{Q}}{\E{a \sim \pi_0(\cdot | x_i)}{g(\pi_\theta(a | x_i), \pi_0(a | x_i))^2c(x_i, a)^2}}\nonumber\\& \hspace{7cm} + \frac{\lambda}{2n}\sum_{i=1}^n\E{\theta \sim \mathbb{Q}}{g(\pi_\theta(a_i | x_i), \pi_0(a_i | x_i))^2c(x_i, a_i)^2}\,.
\end{align}
However we know that $c(x, a)^2 \leq 1$ for any $x\in \cX$ and $a \in \cA$ and that $c(x_i, a_i) = c_i$ for any $i \in [n]$. Thus the following inequality holds with probability at least $1-\delta/2$ for any distribution $\mathbb{Q}$ on $\Theta$
\begin{align}\label{eq:app_i3}
   \left|I_3  \right| &\leq \frac{{\textsc{kl}}_{2}(\mathbb{Q})}{n \lambda } + \frac{\lambda}{2n}\sum_{i=1}^n  \E{\theta \sim \mathbb{Q}}{\E{a \sim \pi_0(\cdot | x_i)}{g(\pi_\theta(a | x_i), \pi_0(a | x_i))^2}}+ \frac{\lambda}{2n}\sum_{i=1}^n\E{\theta \sim \mathbb{Q}}{g(\pi_\theta(a_i | x_i), \pi_0(a_i | x_i))^2 c_i^2}\,,\nonumber\\
   &=\frac{{\textsc{kl}}_{2}(\mathbb{Q})}{n \lambda } + \frac{\lambda}{2}\bar{V}_n(\mathbb{Q})\,,
\end{align}
where we use that $g(\pi_\theta(a | x), \pi_0(a | x)) = \hat{w}_\theta(x, a)$. The union bound of \eqref{eq:app_i1} and \eqref{eq:app_i3} combined with the deterministic result in \eqref{eq:app_i2} yields that the following inequality holds with probability at least $1-\delta$ for any distribution $\mathbb{Q}$ on $\Theta$
\begin{align}\label{eq:app_main_inequality}
    &|\E{\theta \sim \mathbb{Q}}{R(\pi_{\theta})-\hat{R}(\pi_{\theta}, S)}|  \leq  \sqrt{ \frac{{\textsc{kl}}_{1}(\mathbb{Q})}{2n} } + B_n(\mathbb{Q})+\frac{{\textsc{kl}}_{2}(\mathbb{Q})}{n \lambda } + \frac{\lambda}{2}\bar{V}_n(\mathbb{Q})\,.
\end{align}
\end{proof}

\section{ADDITIONAL EXPERIMENTS}\label{sec:experiments_details}

\subsection{Detailed Setup}
We begin by employing the well-established supervised-to-bandit conversion method as described in \citet{agarwal2014taming}. Specifically, we work with two sets from a classification dataset: the training set denoted as $\mathcal{S}^{\textsc{tr}}$ and the testing set as $\mathcal{S^{\textsc{ts}}}$. The first step involves transforming the training set, $\mathcal{S}^{\textsc{tr}}$, into a bandit logged data denoted as $S$, following the procedure outlined in \cref{alg:supervised_to_bandit}. This newly created logged data, $S$, is subsequently employed to train our policies. The next phase assesses the effectiveness of the learned policies on the testing set, $\mathcal{S}^{\textsc{ts}}$, as outlined in \cref{alg:supervised_to_bandit_test}. We measure the performance of these policies using the reward obtained by running \cref{alg:supervised_to_bandit_test}. Higher rewards indicate superior performance. In our experimental evaluations, we make use of various image classification datasets, specifically: \texttt{MNIST} \citep{lecun1998gradient} and \texttt{FashionMNIST} \citep{xiao2017fashion}.

The input to \cref{alg:supervised_to_bandit} is a logging policy denoted as $\pi_0$, defined as follows
\begin{align}\label{eq:logging}
&\pi_0(a | x) = \frac{ \exp(\eta_0  \phi(x)^\top \mu_{0, a})}{\sum_{a^\prime \in \cA} \exp(\eta_0  \phi(x)^\top \mu_{0, a^\prime})}\,, & \forall (x,a) \in \cX \times \cA\,.
\end{align}
Here, $\phi(x) \in \real^d$ represents the feature transformation function, which writes $\phi(x) = \frac{x}{\|x\|}$. The parameters $\mu_0 = (\mu_{0,a})_{a \in \cA} \in \real^{dK}$ are learned using a fraction (5\%) of the training set $\mathcal{S}^{\textsc{tr}}$, with the cross-entropy loss. Optimization is carried out using the Adam optimizer \citep{kingma2014adam}. The inverse-temperature parameter $\eta_0$ is a critical factor affecting the performance of the logging policy. A high positive value of $\eta_0$ indicates a well-performing logging policy, while a negative value leads to a lower-performing one. In all experiments, the prior $\mathbb P$ is set as $\mathbb{P}= \cN(\eta_0 \mu_0, I_{dK})$.

\begin{algorithm}
\caption{Supervised-to-bandit: creating the logged data}
\label{alg:supervised_to_bandit}
\textbf{Input.} Classification training dataset $\mathcal{S}^{\textsc{tr}}=\{(x_i, y_i)\}_{i=1}^n$, logging policy $\pi_0$.\\
\textbf{Output.} logged data $S=(x_i, a_i, c_i)_{i \in [n]}$.\\
Initialize $S =\{\}$ \\
\For{$i=1, \dots, n$}
{$a_i \sim \pi_0(\cdot | x_i)$\\
$c_i = - \mathbb{I}_{\{a_i = y_i\}}$\\
$S \gets S \cup \{(x_i, a_i, c_i)\}\,.$ 
}
\end{algorithm}

\begin{algorithm}
\caption{Supervised-to-bandit: testing policies}
\label{alg:supervised_to_bandit_test}
\textbf{Input:} Classification test dataset $\mathcal{S}^{\textsc{ts}}=\{(x_i, y_i)\}_{i=1}^{n_{\textsc{ts}}}$, learned policy $ \hat{\pi}_n$.\\
\textbf{Output:} Test reward $r$.\\
\For{$i=1, \dots, n_{\textsc{ts}}$}
{$a_i \sim \hat{\pi}_n(\cdot | x_i)$\\
$r_i = \mathbb{I}_{\{a_i = y_i\}}$}
$r = \frac{1}{n_{\textsc{ts}}} \sum_{i=1}^{n_{\textsc{ts}}} r_i\,.$
\end{algorithm}

\subsection{Bound Optimization}\label{app:bound_opt}

In this section, the learned policy $\hat{\pi}_n$ is obtained by optimizing the following objective 
\begin{align}\label{eq:app_objective_pac_bayes}
   \argmax_{\mathbb{Q}} \E{\theta \sim \mathbb{Q}}{\hat{R}(\pi_{\theta}, S)} + \sqrt{ \frac{{\textsc{kl}}_{1}(\mathbb{Q})}{2n} } + B_n(\mathbb{Q})  +
\frac{{\textsc{kl}}_{2}(\mathbb{Q})}{n \lambda } + \frac{\lambda}{2}\bar{V}_n(\mathbb{Q})\,,
\end{align}
where the quantities are defined in \cref{thm:main_result} and the learning policies $\pi_\theta$ are defined as softmax policies as
\begin{align}\label{eq:app_softmax_pac_bayes}
    \pi^{\textsc{sof}}_{\theta}(a | x) &= \frac{\exp(\phi(x)^\top \theta_a)}{\sum_{a^\prime \in \cA}\exp(\phi(x)^\top  \theta_{a^\prime})}\,,
\end{align} 
To optimize \eqref{eq:app_objective_pac_bayes}, we employ the local reparameterization trick \citep{kingma2015variational}. Precisely, we set $\mathbb{Q} =  \mathcal{N}\left(\mu, \sigma^2 I_{dK}\right)$ where $\mu \in \real^{dK}$ and $\sigma>0$ are learnable parameters. Then roughly speaking, the terms $\E{\theta \sim \mathbb{Q}}{\hat{R}(\pi_{\theta}, S)}$, $ B_n(\mathbb{Q})$ and $\bar{V}_n(\mathbb{Q})$ in \eqref{eq:app_objective_pac_bayes} are of the form $\E{\theta \sim \mathcal{N}\left(\mu, \sigma^2 I_{dK}\right)}{f(\pi^{\textsc{sof}}_{\theta}(a | x))}$ for some function $f$ and they can be rewritten as
\begin{align*}
  \E{\theta \sim \mathcal{N}\left(\mu, \sigma^2 I_{dK}\right)}{f(\pi^{\textsc{sof}}_{\theta}(a | x))}
  &= \E{\theta \sim \cN(\mu, \sigma^2 I_{dK})}{f\Big(\frac{\exp(\phi(x)^\top \theta_a)}{\sum_{a^\prime \in \cA}\exp(\phi(x)^\top  \theta_{a^\prime})}\Big)}\,,\\
  &= \E{\epsilon \sim \cN(0, I_{K})}{f\Big(\frac{\exp(\phi(x)^\top \mu_a + \sigma \epsilon_a)}{\sum_{a^\prime \in \cA}\exp(\phi(x)^\top  \mu_{a^\prime} + \sigma \epsilon_{a^\prime})}\Big)}\,,
\end{align*}
where we use in the second equality the fact that $\|\phi(x)\|_2=1$ in our experiments since we normalized features. Then the above expectation can be approximated as 
\begin{align*}
  \E{\theta \sim \mathcal{N}\left(\mu, \sigma^2 I_{dK}\right)}{f(\pi^{\textsc{sof}}_{\theta}(a | x))}
  & \approx  \frac{1}{S} \sum_{i \in [S]}{f\Big(\frac{\exp(\phi(x)^\top \mu_a + \sigma \epsilon_{i, a})}{\sum_{a^\prime \in \cA}\exp(\phi(x)^\top  \mu_{a^\prime} + \sigma \epsilon_{i, a^\prime})}\Big)}\,, & 
\epsilon_i  \sim \cN(0, I_{K})\,, \forall i \in [S] \,.
\end{align*}
for some $S\geq 1$. Similarly, the gradients are approximated as 
\begin{align*}
  \nabla_{\mu, \sigma}\E{\theta \sim \mathcal{N}\left(\mu, \sigma^2 I_{dK}\right)}{f(\pi^{\textsc{sof}}_{\theta}(a | x))}
  &\approx  \frac{1}{S} \sum_{i \in [S]}\nabla_{\mu, \sigma}f\Big(\frac{\exp(\phi(x)^\top \mu_a + \sigma \epsilon_{i, a})}{\sum_{a^\prime \in \cA}\exp(\phi(x)^\top  \mu_{a^\prime} + \sigma \epsilon_{i, a^\prime})}\Big)\,, &\epsilon_i  \sim \cN(0, I_{K})\,, \forall i \in [S] \,.
\end{align*}

\subsection{Bound Optimization When IW Regulization is Linear}\label{app:linear_reg}

In this section, the learned policy $\hat{\pi}_n$ is obtained by optimizing the following objective derive from the bound in \cref{corr:lin_reg_main}
\begin{align}
   \argmax_{\mathbb{Q}} \hat{R}(\pi_{\mathbb{Q}}, S) + \sqrt{ \frac{{\textsc{kl}}_{1}(\mathbb{Q})}{2n} } + B_n(\pi_{\mathbb{Q}})  +
\frac{{\textsc{kl}}_{2}(\mathbb{Q})}{n \lambda } + \frac{\lambda}{2}\bar{V}_n(\pi_{\mathbb{Q}})\,,
\end{align}
where ${\textsc{kl}}_{1}(\mathbb{Q})=D_{\mathrm{KL}}(\mathbb{Q} \| \mathbb{P})+\log \frac{4\sqrt{n}}{\delta}$, ${\textsc{kl}}_{2}(\mathbb{Q})=D_{\mathrm{KL}}(\mathbb{Q} \| \mathbb{P})+\log (4 / \delta)$, and
\begin{align*}
    &\bar{V}_n(\pi_{\mathbb{Q}}) = \frac{1}{n}\sum_{i=1}^n  \E{a \sim \pi_0(\cdot | x_i)}{\frac{\pi_{\mathbb{Q}}(a |x_i)}{h(\pi_{0}(a |x_i))^2}}+\frac{\pi_{\mathbb{Q}}(a_i |x_i)}{h(\pi_{0}(a_i |x_i))^2} c_i^2\\
   &B_n(\pi_{\mathbb{Q}}) = 1 - \frac{1}{n} \sum_{i=1}^{n} \sum_{a \in \cA}\pi_0(a | x_i)\frac{\pi_{\mathbb{Q}}(a |x_i)}{h(\pi_{0}(a |x_i))}\,.
\end{align*}
We optimize this objective over learning policies $\pi_\mathbb{Q}$ that are defined as Gaussian policies of the form
\begin{align}\label{eq:gaussian_pac_bayes}
   &\pi^{\textsc{gaus}}_{\mu, \sigma}(a | x) = \E{\theta \sim \cN(\mu, \sigma^2 I_d)}{\pi_\theta(a|x)}\,, & \text{where } \, \pi_\theta(a|x) = \mathds{1}_{\{ \argmax_{a^\prime \in \cA} \phi(x)^\top \theta_{a^\prime} = a \}}\,.
\end{align}
 Note that these Gaussian policies satisfy the form $\pi_{\mathbb Q}(a | x) = \E{\theta \sim \mathbb Q}{\pi_\theta(a|x)}$ where  $\pi_\theta$ is binary, which required by \cref{corr:lin_reg_main}. There is no expectation in the above objective and hence the method described in \cref{app:bound_opt} is no longer needed. All we need is to be able to compute the propensities $\pi_\mathbb{Q}(a|x)$ and gradients of the above objective with respect to $\mathbb{Q}$, which boils down to computing the gradient of $\pi_\mathbb{Q}$ with respect to $\mathbb{Q}$ since the objective above is linear in $\pi_\mathbb{Q}$. Computing propensities and gradients is done as follows. First, \citet{sakhi2022pac} showed that \eqref{eq:gaussian_pac_bayes} can be written as 
\begin{align*}
 \pi^{\textsc{gaus}}_{\mu, \sigma}(a | x)  & =\mathbb{E}_{\epsilon \sim \cN(0, 1)}\Big[\prod_{a^{\prime} \neq a} \Phi\big(\epsilon+\frac{\phi(x)^\top\left(\mu_a-\mu_{a^{\prime}}\right)}{\sigma\|\phi(x)\|}\big)\Big]\,,
\end{align*}
where $\Phi$ is the cumulative distribution function of a standard normal variable. Then, the propensities are approximated as 
\begin{align}
      &\pi^{\textsc{gaus}}_{\mu, \sigma}(a | x)  \approx \frac{1}{S} \sum_{i \in [S]}{\prod_{a^{\prime} \neq a} \Phi\big(\epsilon_i+\frac{\phi(x)^\top\left(\mu_a-\mu_{a^{\prime}}\right)}{\sigma \|\phi(x)\|}\big)}\,, & 
\epsilon_i  \sim \cN(0, 1)\,, \forall i \in [S]\,.
\end{align}
Similarly, the gradient reads 
\begin{align*}
    \nabla_{\mu, \sigma} \pi^{\textsc{gaus}}_{\mu, \sigma}(a | x) =  \mathbb{E}_{\epsilon \sim \cN(0, 1)}\Big[\nabla_{\mu, \sigma}\prod_{a^{\prime} \neq a} \Phi\big(\epsilon+\frac{\phi(x)^\top\left(\mu_a-\mu_{a^{\prime}}\right)}{\sigma \|\phi(x)\|}\big)\Big]\,,
\end{align*}
which can be approximated as 
\begin{align*}
    \nabla_{\mu, \sigma} \pi^{\textsc{gaus}}_{\mu, \sigma}(a | x) =  \frac{1}{S} \sum_{i \in [S]} \nabla_{\mu, \sigma}\prod_{a^{\prime} \neq a} \Phi\big(\epsilon_i+\frac{\phi(x)^\top\left(\mu_a-\mu_{a^{\prime}}\right)}{\sigma \|\phi(x)\|}\big)\,,
\end{align*}

The results are presented in \cref{fig:main_exp_results_app_lin} and are generally consistent with the conclusions drawn in \cref{sec:experiments}. The main distinction is that when utilizing linear IW regularizations and the approach detailed in \cref{corr:lin_reg_main}, all methods exhibit better performance compared to when they are optimized using the theoretical bound in \cref{thm:main_result}, which is applicable to potentially non-linear IW regularizations. This improvement is attributed to the reduction of variance achieved by removing the expectation \(\E{\theta \sim \mathbb Q}{\cdot}\) from the bound and employing Gaussian policies.

\begin{figure}[H]
  \centering  \includegraphics[width=\linewidth]{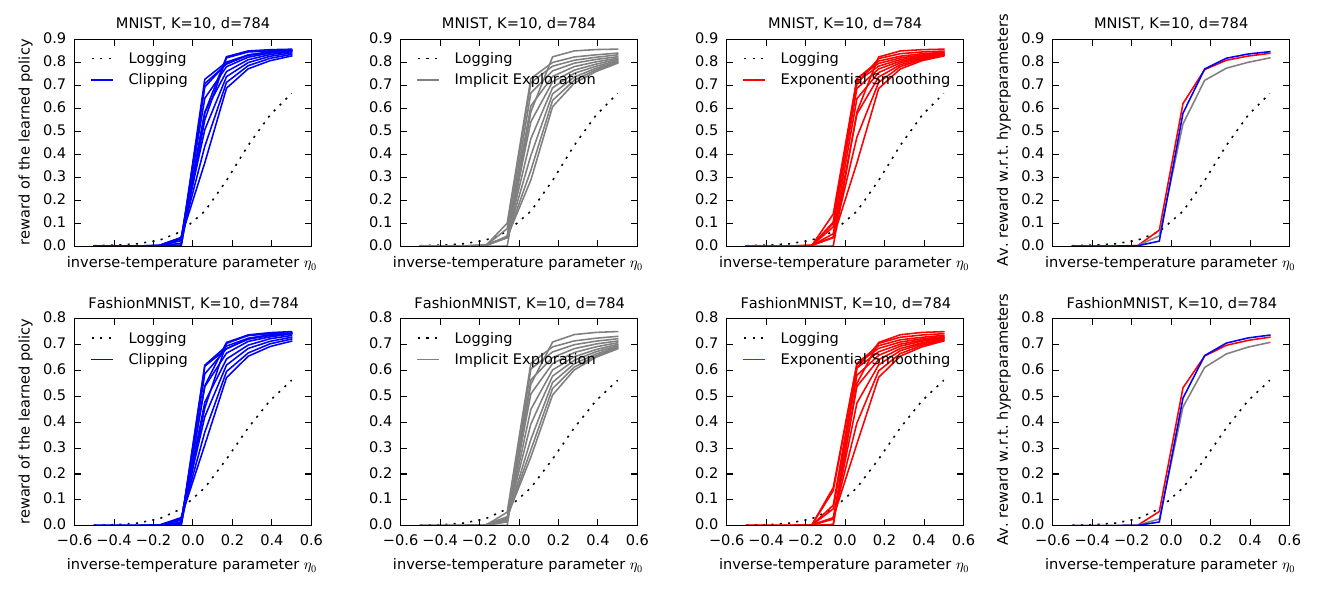}
  \vspace{-0.8cm}
  \caption{Performance of the policy learned by optimizing the bound in \cref{corr:lin_reg_main} for different IW regularizations. The \(x\)-axis reflects the quality of the logging policy \(\eta_0 \in [-0.5, 0.5]\). In the first three columns, we plot the reward of the learned policy using a fixed IW regularization technique (\texttt{Clip}, \texttt{IX}, or \texttt{ES} as defined in \eqref{eq:regs}) for various values of its hyperparameter within \([0,1]\). In the last column, we report the mean reward across these hyperparameter values.} 
  \label{fig:main_exp_results_app_lin}
\end{figure}

\subsection{Comparing heuristics}\label{app:heuristic_comp}
We also compared our \textbf{Heuristic Optimization} \eqref{eq:learning_principle} with the \(L_2\) heuristic from \citet{london2019bayesian} and found that both heuristics exhibit identical performance (red and blue colors overlap in this plot).

\begin{figure}[H]
  \centering  \includegraphics[width=0.6\linewidth]{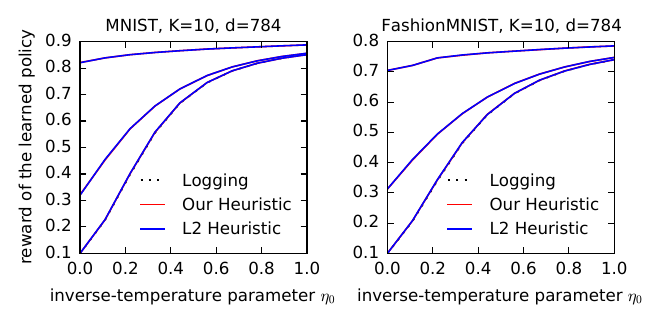}
  \vspace{-0.4cm}
  \caption{Performance of the learned policy with two learning principles (our \textbf{Heuristic Optimization} \eqref{eq:learning_principle} and the $L_2$ heuristic in \citet{london2019bayesian} with varying values of their hyperparameters in a grid within $[10^{-5}, 10^{-3}]$) using the \texttt{Clip} IPS risk estimator in \eqref{eq:regs} with fixed  \(\tau=1/\sqrt[4]{n}\).} 
  \label{fig:heuristic_comparison}
\end{figure}

\subsection{Tightness of the bound}\label{app:bound_tightness}

We assess the tightness of our bound for a fixed IW regularization on the \texttt{MNIST} dataset. Specifically, we consider the \texttt{Clip} method as defined in \eqref{eq:regs}, which regularizes the IW as \(\hat{w}(x, a) = \frac{\pi(a|x)}{\max(\pi_0(a|x), \tau)}\). We apply \cref{corr:lin_reg_main} to this estimator by setting \(h(p) = \max(p, \tau)\) and evaluate the bound at the learned policy for different values of \(\tau\). The results are plotted in \cref{fig:bound_tightness}. Generally, the bound is loose when the logging policy performs poorly, i.e., when \(\eta_0 < 0.2\), and it tightens as the performance of the logging policy improves, i.e., as \(\eta_0\) increases. The value of \(\tau\) affects the bound tightness, but the impact is not very significant in the sense that there is no choice of \(\tau\) that leads to a consistently loose bound, irrespective of the logging policy.

\begin{figure}[H]
  \centering  \includegraphics[width=0.4\linewidth]{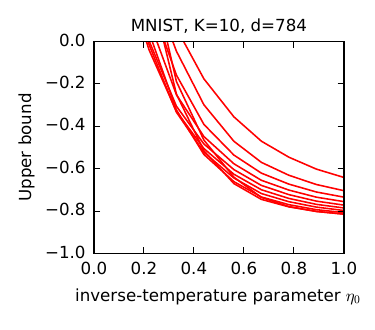}
  \vspace{-0.4cm}
  \caption{Tightness of the bound in \cref{corr:lin_reg_main} applied to \texttt{Clip}-IPS in \eqref{eq:regs} with varying values of hyperparameter $\tau$.} 
  \label{fig:bound_tightness}
\end{figure}

\vfill

\end{document}